\newcommand{\mypara}[1]{{\textbf{#1}}}
\renewcommand{\epsilon}{\varepsilon}
\newcommand{\sign}{\textup{\textsf{sign}}}
\newcommand{\sgn}{\textup{\textsf{sign}}}
\newcommand{\dist}{\mathsf{dist}}
\newcommand{\Appendix}[1]{the full version for}
\newtheorem{theorem}{Theorem}[section]
\newtheorem{lemma}[theorem]{Lemma}
\newtheorem{definition}{Definition}
\newcommand{\x}{\bm{x}}
\newcommand{\R}{\mathbb{R}}
\newcommand{\X}{\mathbf{X}}
\newcommand{\cL}{\mathcal{L}}
\newcommand{\cX}{\mathcal{X}}
\newcommand{\bbB}{\mathbb{B}}
\newcommand{\bbE}{\mathbb{E}}
\def\calX{\mathcal{X}}
\DeclareMathOperator*{\argmax}{argmax}
\DeclareMathOperator*{\argmin}{argmin}
\title{A Closer Look at Accuracy vs. Robustness}
\newcommand*\samethanks[1][\value{footnote}]{\footnotemark[#1]}
\author{%
  Yao-Yuan Yang\thanks{Equal contribution} \space $^{1}$ \quad
  Cyrus Rashtchian\samethanks \space\space $^{1}$ \quad
  Hongyang Zhang$^{2}$ \\
  \And
	Ruslan Salakhutdinov$^{3}$ \quad
  Kamalika Chaudhuri$^{1}$ \\
  \\
  $^1$University of California, San Diego \\
  $^2$Toyota Technological Institute at Chicago \\
  $^3$Carnegie Mellon University \\
  \texttt{\{yay005, crashtchian\}@eng.ucsd.edu \hspace{0.5em} hongyanz@ttic.edu} \\
  \texttt{rsalakhu@cs.cmu.edu \hspace{0.5em} kamalika@cs.ucsd.edu}	
}
\begin{document}

\maketitle

\setcounter{footnote}{0} 

\begin{abstract}
Current methods for training robust networks lead to a drop in test accuracy, which has led prior works to posit that a robustness-accuracy tradeoff may be inevitable in deep learning. We take a closer look at this phenomenon and first show that real image datasets are actually separated. With this property in mind, we then prove that robustness and accuracy should both be achievable for benchmark datasets through locally Lipschitz functions, and hence, there should be no inherent tradeoff between robustness and accuracy. Through extensive experiments with robustness methods, we argue that the gap between theory and practice arises from two limitations of current methods: either they fail to impose local Lipschitzness or they are insufficiently generalized. We explore combining dropout with robust training methods and obtain better generalization. We conclude that achieving robustness and accuracy in practice may require using methods that impose local Lipschitzness and augmenting them with deep learning generalization techniques.\footnote{Code available at \url{https://github.com/yangarbiter/robust-local-lipschitz}.}
\end{abstract}

\section{Introduction}

A growing body of research shows that neural networks are vulnerable to
{\em{adversarial examples}}, test inputs that have been modified slightly yet
strategically to cause misclassification~\citep{szegedy2013intriguing,
goodfellow6572explaining}. While a number of defenses have been
proposed~\citep{cohen2019certified,madry2018towards,salman2019provably,zhang2019theoretically},
they are known to hurt test accuracy on many
datasets~\citep{raghunathan2019adversarial,madry2018towards,
zhang2020understanding}. This observation has led prior works to claim that a
tradeoff between robustness and accuracy may be {\em inevitable} for many
classification tasks~\citep{tsipras2018robustness, zhang2019theoretically}. 

We take a closer look at the tradeoff between robustness and
accuracy, aiming to identify properties of data and training methods that
enable neural networks to achieve {\em both}. A plausible reason why robustness may lead to lower accuracy is that different classes are very close together or they may even overlap (which underlies the argument for an
inevitable tradeoff~\cite{tsipras2018robustness}). We begin by testing if this
is the case in real data through an empirical study of four image
datasets. Perhaps surprisingly, we find that these datasets actually satisfy a natural separation property
that we call $r$-separation: examples from different classes are at least
distance $2r$ apart in pixel space. This $r$-separation holds for values of $r$
that are higher than the perturbation radii used in adversarial example
experiments. 

We next consider separation as a guiding principle for better understanding the robustness-accuracy tradeoff. Neural network classifiers are typically obtained by rounding an underlying continuous function $f : \calX \to \R^C$ with $C$ classes.  We take inspiration from prior work, which shows that Lipschitzness of $f$ is closely related to its robustness~\citep{cohen2019certified, hein2017formal,salman2019provably,
weng2018evaluating, zhai2019adversarially}.  However, one drawback of the existing arguments is that they do not provide a compelling and realistic assumption on the data that guarantees robustness and accuracy. We show theoretically that any $r$-separated data distribution has a classifier that is both robust up to perturbations of size
$r$, and accurate, and it can be obtained by rounding a function that is locally
Lipschitz around the data. This suggests that there should exist a robust and highly
accurate classifier for real image data. Unfortunately, the current state of robust classification falls short of this prediction, and the discrepancy remains poorly understood. 

To better understand the theory-practice gap, we empirically investigate
several existing methods on a few image datasets with a special focus on their local
Lipschitzness and generalization gaps.  We find that of the methods investigated, adversarial
training (AT)~\citep{madry2018towards}, robust self-training
(RST)~\citep{raghunathan2020understanding} and
TRADES~\citep{zhai2019adversarially} impose the highest degree of local
smoothness, and are the most robust. We also find that the three robust methods have large
gaps between training and test accuracies as well as adversarial
training and test accuracies. This suggests that the disparity between theory and
practice may be due to the limitations of existing training procedures, particularly in regards to generalization. We then experiment with
dropout, a standard generalization technique, on top of robust training methods
on two image datasets where there is a significant generalization gap. We
see that dropout in particular narrows the generalization gaps of TRADES
and RST, and improves test accuracy, test adversarial accuracy as well as test Lipschitzness. In summary, our contributions are as follows.

\begin{itemize}
\item Through empirical measurements, we show that several image datasets are separated.
\item We prove that this separation implies the existence of a robust and perfectly accurate classifier that can be obtained by rounding a locally Lipschitz function. In contrast to prior conjectures~\cite{fawzi2018adversarial,gilmer2018adversarial,tsipras2018robustness}, robustness and accuracy can be achieved together in principle.
\item We investigate smoothness and generalization properties of classifiers produced by current training methods. We observe that the training methods AT, TRADES, and RST, which produce robust classifiers, also suffer from large generalization gaps. We combine these robust training methods with dropout~\cite{srivastava2014dropout}, and show that this narrows the generalization gaps and sometimes makes the classifiers smoother.  
\end{itemize}

What do our results imply about the robustness-accuracy tradeoff in deep learning? They suggest that this tradeoff is not inherent. Rather, it is a consequence of current robustness methods. The past few years of
research in robust machine learning has led to a number of new
loss functions, yet the rest of the training process -- network topologies,
optimization methods, generalization tools -- remain highly tailored to
promoting accuracy. We believe that in order to achieve both robustness and
accuracy, future work may need to redesign other aspects of the training
process such as better network architectures using neural architecture
search~\cite{fernando2017pathnet,guo2019meets, saxena2016convolutional,
zoph2016neural}. Combining this with improved optimization methods and
robust losses may be able to reduce the generalization gap in practice.
\section{Preliminaries}

Let $\cX\subseteq \R^d$ be an instance space equipped with a  metric $\dist: \calX \times \calX \rightarrow \R^{+}$; this is the metric in which robustness is measured. Let $[C] = \{1,2,\ldots, C\}$ denote the set of possible labels with $C \geq 2$.  For a function $f: \calX \to \R^C$, let $f(\x)_i$ denote the value of the $i$th coordinate.

\mypara{Robustness and Astuteness.}  Let $\bbB(\x,\epsilon)$ denote a ball of radius $\epsilon >0$ around $\x$ in a metric space. We use $\bbB_\infty$ to denote the $\ell_\infty$ ball. A classifier $g$ is {\em robust} at $\x$ with radius $\epsilon >0$ if for all $\x' \in \bbB(\x, \epsilon)$, we have $g(\x') = g(\x)$. Also, $g$ is {\em astute} at $(\x, y)$ if $g(\x') = y$ for all $\x' \in \bbB(\x, \epsilon)$.
The {\em astuteness} of~$g$ at radius $\epsilon > 0$ under a distribution $\mu$ is
$$
\Pr_{(\x, y) \sim \mu}[g(\x') = y \mbox{ for all } \x' \in \bbB(\x, \epsilon)].
$$
The goal of robust classification is to find a $g$ with the highest astuteness~\cite{wang2018analyzing}. We sometimes use {\em clean accuracy} to refer to standard test accuracy (no adversarial perturbation), in order to differentiate it from {\em robust accuracy} a.k.a.~astuteness (with adversarial perturbation).

\mypara{Local Lipschitzness.} Here we define local Lipschitzness theoretically; Section~\ref{sec:validation} later provides an empirical way to estimate this quantity.
\begin{definition}
	Let $(\mathcal{X}, \dist)$ be a metric space. 
	A function $f : \mathcal{X} \to \R^C$ is $L$-locally Lipschitz at radius~$r$ if for each $i \in [C]$, we have
	$
	|f(\x)_i - f(\x')_i| \leq L \cdot \dist(\x,\x') 
	$
	for all $\x'$ with $\dist(\x,\x') \leq r$.
\end{definition}

\mypara{Separation.} We formally define separated data distributions as follows. Let $\mathcal{X}$ contain $C$ disjoint classes $\calX^{(1)}, \ldots, \calX^{(C)}$, where all points in $\calX^{(i)}$ have label~$i$ for $i \in [C]$.

\begin{definition}[$r$-separation]
	We say that a data distribution over  $\bigcup_{i \in [C]} \calX^{(i)}$ is $r$-separated if $\dist(\calX^{(i)}, \calX^{(j)}) \geq 2r$ for all $i \neq j$, where 
	$\dist(\calX^{(i)}, \calX^{(j)}) = \min_{\x \in \calX^{(i)}, \x' \in \calX^{(j)}} \dist(\x,\x').$
\end{definition}

In other words, the distance between any two examples from different classes is at least $2r$. 
One of our motivating observations is that many real
classification tasks comprise of separated classes; for example, if $\dist$ is the $\ell_{\infty}$ norm, then images with different categories (e.g., dog, cat, panda, etc) will be $r$-separated for some value $r>0$ depending on the image space. In the next section, we empirically verify that this property actually holds for a number of standard image datasets.
\section{Real Image Datasets are $r$-Separated}
\label{sec:separation}

We begin by addressing the question: Are image datasets $r$-separated for $\epsilon \ll r$ and attack radii $\epsilon$ in standard robustness experiments?  While we cannot determine the underlying data distribution, we can empirically measure whether current training and test sets are $r$-separated. These measurements can potentially throw light on what can be achieved in terms of test robustness in real data.

We consider four  datasets: MNIST,
CIFAR-10, SVHN and Restricted
ImageNet (ResImageNet), where ResImageNet contains images from a subset of  ImageNet classes~\cite{madry2018towards,ross2017improving,tsipras2018robustness}.  
We present two statistics in Table~\ref{tab:separation} The {\em{Train-Train Separation}} is the $\ell_{\infty}$ distance between each training example and its closest neighbor with a different class label in the training set, while the {\em{Test-Train Separation}} is the $\ell_{\infty}$ distance between each test example and its closest example with a different class in the training set. See Figure~\ref{fig:separation_2} for histograms. We use exact nearest neighbor search to calculate distances. Table~\ref{tab:separation} also shows  the typical adversarial attack radius $\epsilon$ for the datasets; more details are presented in the Appendix~\ref{app:sep}.

Both the Train-Train and Test-Train separations are higher than $2\epsilon$ for all four datasets. We note that SVHN contains a single duplicate example with multiple labels, and one highly noisy example;
removing these three examples out of $73,257$ gives us a minimum Train-Train Separation of $0.094$, which is more than enough for attack radius $\epsilon = 0.031 \approx 8/255$.
Restricted ImageNet is similar with three pairs of duplicate examples, and two other highly noisy training examples (see Figures~\ref{fig:svhn_removed} and \ref{fig:resimgnet_removed} in Appendix \ref{app:sep}). Barring a handful of highly noisy examples, real image datasets are indeed $r$-separated when $r$ is equal to the attack radii commonly used in adversarial robustness experiments. 

\vspace{1em}
\begin{minipage}{.55\textwidth}
	\centering
	\begin{tabular}{lccc}
		\toprule
		& \thead{adversarial\\ perturbation\\ $\epsilon$}
		& \thead{minimum\\ Train-Train\\ separation}
		& \thead{minimum\\ Test-Train\\ separation} \\
		\midrule
		MNIST                 & 0.1                      & 0.737 & 0.812 \\
		CIFAR-10              & 0.031            & 0.212 & 0.220 \\
		SVHN                  & 0.031            & 0.094 & 0.110 \\
		ResImageNet   & 0.005                    & 0.180 & 0.224  \\
		\bottomrule
	\end{tabular}
	\captionof{table}{Separation of real data is 3$\times$ to 7$\times$ typical perturbation radii.
	}
	\label{tab:separation}
\end{minipage}\hspace{2em}
\begin{minipage}{.4\textwidth}
	\centering
	\includegraphics[width=.85\textwidth]{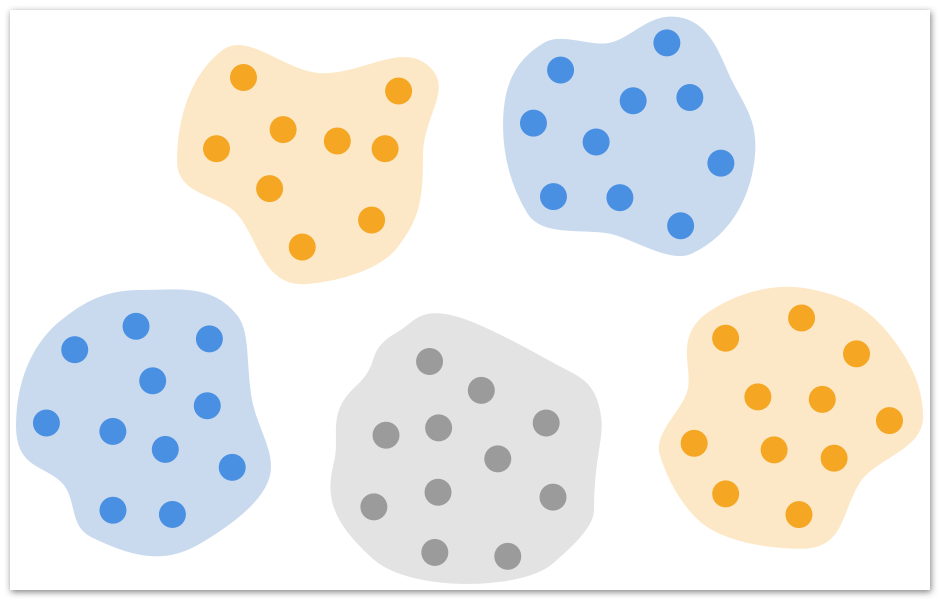}
	\captionof{figure}{Robust classifers exist if the perturbation is less than the separation.
	}
	\label{figure:separatedclusters}
\end{minipage}

\vspace{2em}
These results imply that in real image data, the test images are far apart from training images from a different class. There perhaps are images of dogs which look like cats, but standard image datasets are quite clean, and such images mostly do not occur in either their test nor the training sets. In the next section, we explore consequences of this separation.

\begin{figure}[h!]
	\centering
	\subfloat[MNIST test]{
		\includegraphics[width=0.4\textwidth]{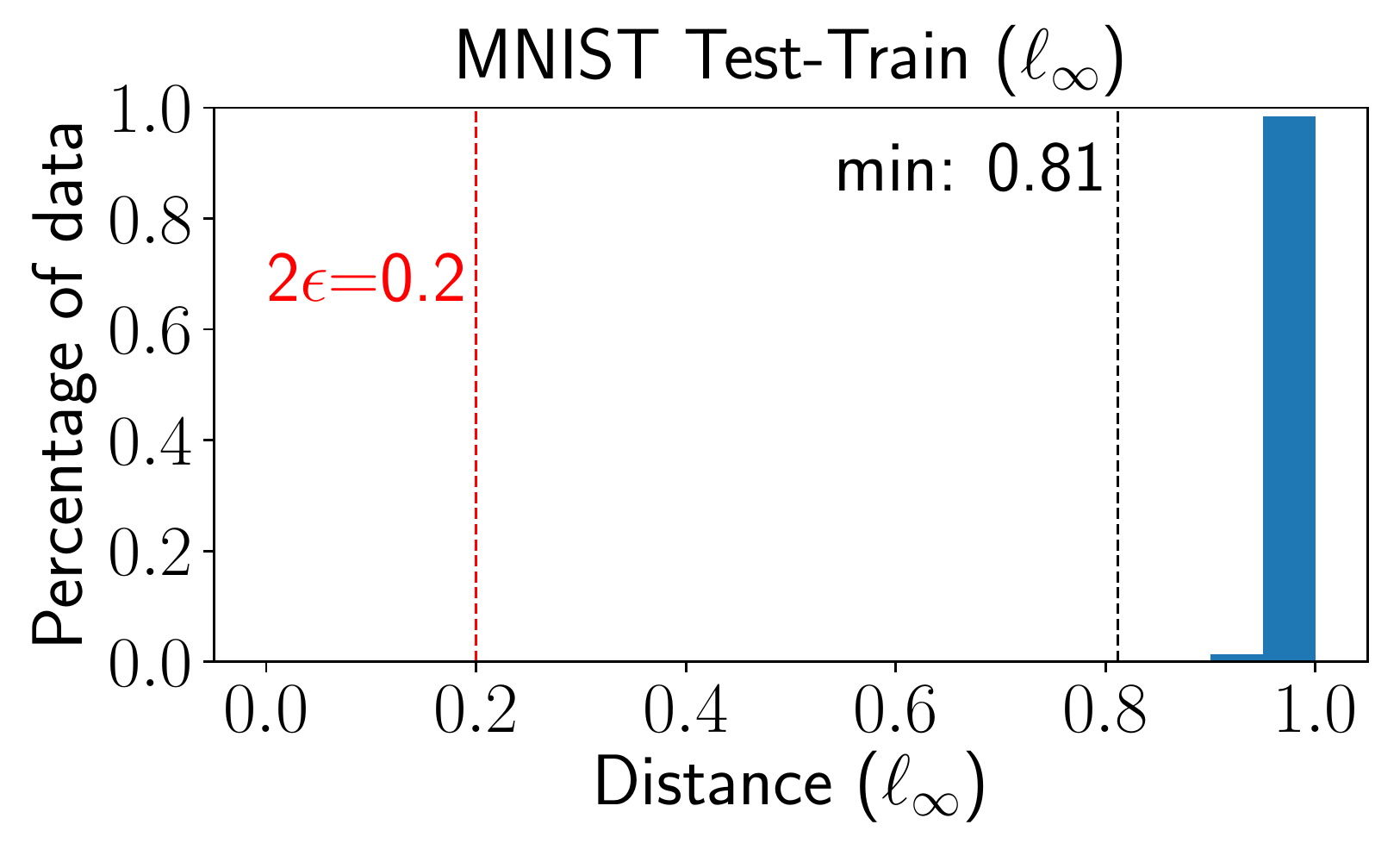}}
	\subfloat[SVHN test]{
		\includegraphics[width=0.4\textwidth]{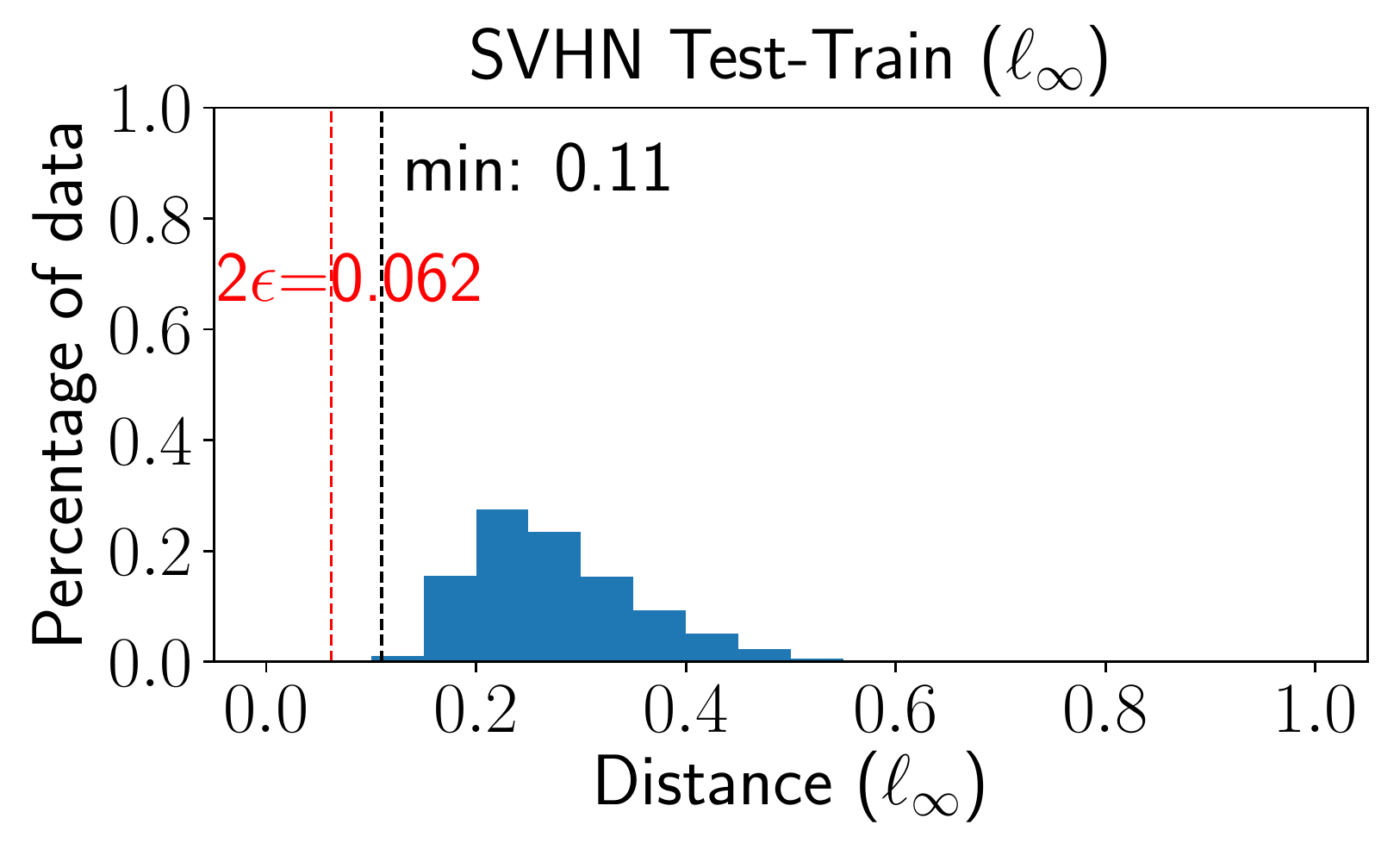}}
	
	\subfloat[CIFAR-10 test]{
		\includegraphics[width=0.4\textwidth]{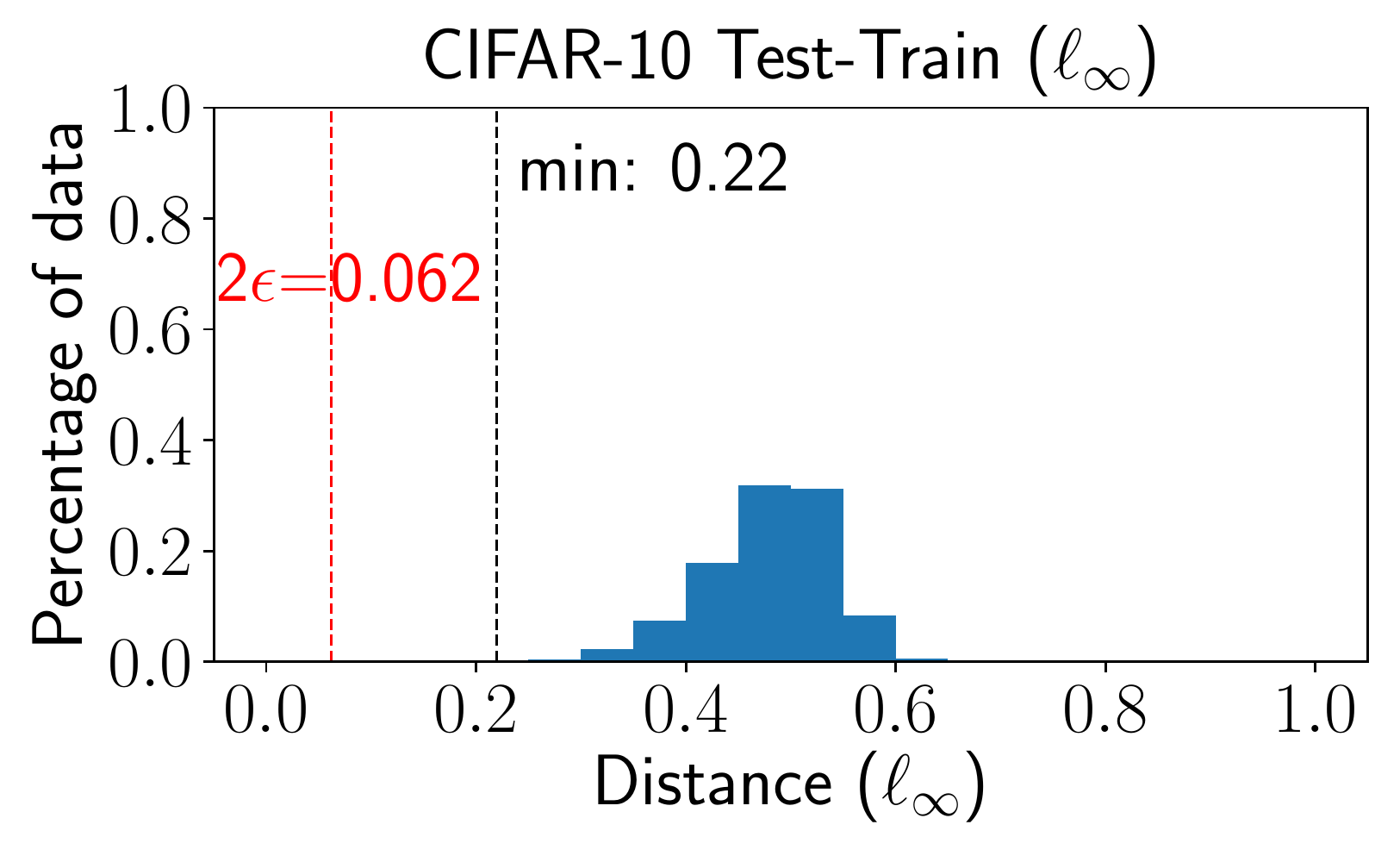}}
	\subfloat[Restricted ImageNet test]{
		\includegraphics[width=0.4\textwidth]{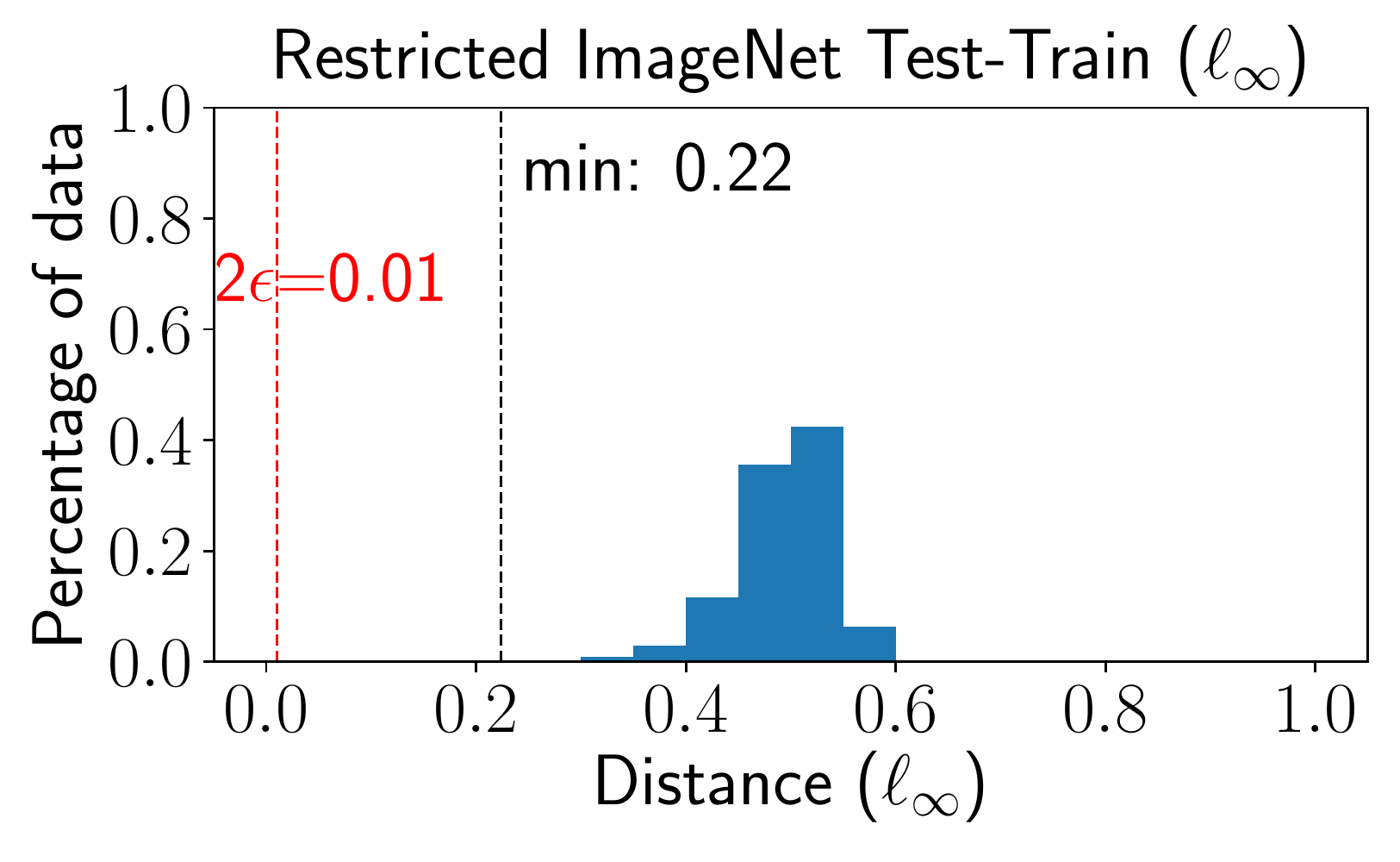}}
	
	\caption{Train-Test separation histograms: MNIST, SVHN, CIFAR-10 and Restricted ImageNet.}
	\label{fig:separation_2}
\end{figure}

\section{Robustness and Accuracy for $r$-Separated Data}

We have just shown that four image datasets are indeed $r$-separated, for $\epsilon \ll r$ where $\epsilon$ is the typical adversarial perturbation used in experiments. We now show theoretically that if a data distribution is $r$-separated, then there exists a robust and accurate classifier that can be obtained by rounding a locally Lipschitz function. Additionally, we supplement these results in Appendix~\ref{app:proof-of-concept} by a constructive ``existence proof'' that demonstrates proof-of-concept neural networks with both high accuracy and robustness on some of these datasets; this illustrates that at least on these image datasets, these classifiers can potentially be achieved by neural networks. 

\subsection{$r$-Separation implies Robustness and Accuracy through Local Lipschitzness}
\label{sec:theorem}

We show that it is theoretically possible to achieve both robustness and accuracy for $r$-separated data. In particular, we exhibit a classifier based on a locally Lipschitz function, which has astuteness~$1$ with radius $r$. Working directly in the multiclass case, our proof uses classifiers of the following form. If there are $C$ classes, we start with a vector-valued function $f : \mathcal{X} \to \R^C$ so that $f(\x)$ is a $C$-dimensional real vector. We let $\dist(\x,\calX^{(i)}) = \min_{\mathbf{z} \in \calX^{(i)}} \dist(\x,\mathbf{z})$.
We analyze the following function
\begin{equation}\label{eq:f-multiclass}
f(\x) = \frac{1}{r} \cdot \left( \dist(\x,\calX^{(1)}), \ldots, \dist(\x,\calX^{(C)}) \right),
\end{equation}
In other words, we set $f(\x)_i = \frac{1}{r} \cdot \dist(\x,\calX^{(i)})$.  Then, we define a classifier $g:\mathcal{X} \to [C]$ as 
\begin{equation}\label{eq:g-multiclass}
g(\x) = \argmin_{i \in [C]} f(\x)_i.
\end{equation}
We show that accuracy and local Lipschitzness together imply astuteness.
\begin{lemma}\label{lem:lipschitztoastute-multiclass}
	Let $f: \calX \rightarrow \R^C$ be a function, and consider $\x \in \calX$ with true label $y \in[C]$. If\vspace{-1ex} 
	\begin{itemize}
		\item  $f$ is $\frac{1}{r}$-Locally Lipschitz in a radius $r$ around $\x$,  and \vspace{-1ex}
		\item $f(\x)_j - f(\x)_y \geq 2$ for all $j \neq y$,\vspace{-1ex}
	\end{itemize}
	then $g(\x) = \argmin_i f(\x)_i$ is astute at $\x$ with radius $r$.
\end{lemma}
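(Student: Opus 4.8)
The plan is to unpack the definition of astuteness and reduce it to a coordinatewise comparison between the true-label coordinate $y$ and every competing coordinate $j$. By definition, $g$ is astute at $\x$ with radius $r$ precisely when $g(\x') = y$, i.e. $y \in \argmin_i f(\x')_i$, for every $\x'$ with $\dist(\x,\x') \le r$. So I would fix an arbitrary such $\x'$ and aim to establish $f(\x')_y \le f(\x')_j$ for all $j \neq y$; this identifies $y$ as a minimizing coordinate of $f(\x')$ and hence forces $g(\x') = y$.

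The core of the argument is a two-sided application of local Lipschitzness. Since $\dist(\x,\x') \le r$ and $f$ is $\tfrac{1}{r}$-locally Lipschitz in the radius-$r$ ball around $\x$, each coordinate of $f$ can change by at most $\tfrac{1}{r}\cdot \dist(\x,\x') \le \tfrac{1}{r}\cdot r = 1$ as we move from $\x$ to $\x'$. Applying this to the true-label coordinate gives the upper bound $f(\x')_y \le f(\x)_y + 1$, while applying it to any competitor $j$ gives the lower bound $f(\x')_j \ge f(\x)_j - 1$. Subtracting these and invoking the margin hypothesis $f(\x)_j - f(\x)_y \ge 2$ yields $f(\x')_j - f(\x')_y \ge \bigl(f(\x)_j - f(\x)_y\bigr) - 2 \ge 0$. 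As $\x'$ was arbitrary in $\bbB(\x,r)$, this shows $y$ minimizes $f(\x')$ everywhere in the ball, giving astuteness.

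I expect the only delicate point to be the degenerate boundary case in which $\dist(\x,\x')=r$ and the margin is tight at exactly $2$, where the chain collapses to $f(\x')_j = f(\x')_y$ and $\argmin$ could in principle prefer a tied coordinate over $y$. I would dispatch this by observing that for $\dist(\x,\x')<r$ the Lipschitz displacement is strictly below $1$, so the final inequality is strict and $y$ is the \emph{unique} minimizer; the remaining boundary sphere can be handled either by adopting the convention that $\argmin$ breaks ties toward the correct label, or by reading $\bbB(\x,r)$ as the open ball. Everything else is a routine substitution, so the proof reduces essentially to the single displayed inequality above.
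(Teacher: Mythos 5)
Your proposal is correct and follows essentially the same argument as the paper: both apply the $\tfrac{1}{r}$-local-Lipschitz bound coordinatewise to get $f(\x')_j \geq f(\x)_j - 1$ and $f(\x')_y \leq f(\x)_y + 1$, then invoke the margin condition $f(\x)_j - f(\x)_y \geq 2$ to conclude $f(\x')_j \geq f(\x')_y$ for all $j \neq y$ and every $\x'$ in the ball. Your extra discussion of the tie case (exact equality when the margin and distance are both tight) is a point the paper's proof silently glosses over, so it is a welcome refinement rather than a deviation.
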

\begin{proof}
	Suppose $\x' \in \calX$ satisfies $\dist(\x, \x') \leq r$.  By the assumptions that $f$ is $\frac{1}{r}$-locally Lipschitz and $f(\x)_j - f(\x)_y \geq 2$, we have that
	$$f(\x')_j \geq f(\x)_j - 1 \geq f(\x)_y + 1 \geq f(\x')_y,$$
	where the first and third inequalities use Lipschitzness, and the middle inequality uses that $$f(\x)_j - f(\x)_y \geq 2.$$  As this holds for all $j \neq y$, we have that $$\argmin_i f(\x')_i = \argmin_i f(\x)_i = y.$$ Thus, we see that $g(\x) = \argmin_i f(\x)_i$ correctly classifies $\x$ while being astute with radius $r$.
\end{proof}

Finally, we show that there exists an astute classifier when the distribution is $r$-separated. 

\begin{theorem}\label{thm:existence-multiclass}
	Suppose the data distribution $\calX$ is $r$-separated, denoting $C$ classes $\calX^{(1)}, \ldots, \calX^{(C)}$. There exists a function $f: \calX \rightarrow \R^C$ such that \vspace{-1ex}
	\begin{enumerate}
		\item[(a)] $f$ is $\frac{1}{r}$-locally-Lipschitz in a ball of radius $r$ around each $\x \in \bigcup_{i \in [C]} \calX^{(i)}$, and \vspace{-1ex}
		\item[(b)]the classifier $g(\x) = \argmin_i f(\x)_i$ has astuteness $1$ with radius $r$.
	\end{enumerate}
\end{theorem}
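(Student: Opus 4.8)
The plan is to exhibit the explicit function $f$ defined in equation~\eqref{eq:f-multiclass}, namely $f(\x)_i = \frac{1}{r}\dist(\x,\calX^{(i)})$, and verify that it satisfies both conditions (a) and (b). The strategy is to reduce everything to Lemma~\ref{lem:lipschitztoastute-multiclass}: if I can check that this particular $f$ is $\frac{1}{r}$-locally Lipschitz and that it has a margin of at least $2$ between the correct coordinate and every other coordinate at each data point, then the lemma immediately delivers astuteness~$1$.

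First I would establish the local Lipschitzness in~(a). The key observation is that each coordinate $\x \mapsto \dist(\x,\calX^{(i)})$ is $1$-Lipschitz as a function of $\x$ with respect to $\dist$, since a min of distance functions satisfies the reverse triangle inequality: $|\dist(\x,\calX^{(i)}) - \dist(\x',\calX^{(i)})| \leq \dist(\x,\x')$. Dividing by $r$, the coordinate $f(\x)_i$ is $\frac{1}{r}$-Lipschitz globally, hence in particular $\frac{1}{r}$-locally Lipschitz in any ball of radius $r$. This handles~(a) directly, and it is the routine part.

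Next I would verify the margin condition needed to invoke the lemma. Fix a data point $\x \in \calX^{(y)}$ with true label $y$. Then $f(\x)_y = \frac{1}{r}\dist(\x,\calX^{(y)}) = 0$ since $\x$ itself lies in $\calX^{(y)}$. For any other class $j \neq y$, the $r$-separation hypothesis gives $\dist(\calX^{(y)},\calX^{(j)}) \geq 2r$, so $\dist(\x,\calX^{(j)}) \geq 2r$ and therefore $f(\x)_j = \frac{1}{r}\dist(\x,\calX^{(j)}) \geq 2$. This yields $f(\x)_j - f(\x)_y \geq 2$ for all $j \neq y$, which is precisely the second hypothesis of Lemma~\ref{lem:lipschitztoastute-multiclass}.

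With both hypotheses of the lemma verified at every data point $\x$, the lemma gives that $g(\x) = \argmin_i f(\x)_i$ is astute at each such $\x$ with radius $r$; since this holds for all points in $\bigcup_{i}\calX^{(i)}$, the astuteness under the distribution is~$1$, establishing~(b). I anticipate the only subtle point is a careful statement of the margin argument and ensuring the $\argmin$ is well-defined, i.e.\ that the correct class strictly minimizes $f$ at data points; this follows because $f(\x)_y = 0$ while all other coordinates are at least $2$, so no tie-breaking issue arises. The whole proof is essentially a direct substitution into the already-proved lemma, so there is no serious obstacle beyond correctly connecting the $r$-separation definition to the margin bound.
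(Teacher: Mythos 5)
Your proposal is correct and follows essentially the same route as the paper's own proof: both use the function $f(\x)_i = \frac{1}{r}\dist(\x,\calX^{(i)})$, verify the $\frac{1}{r}$-Lipschitz property via the triangle inequality and the margin condition $f(\x)_j - f(\x)_y \geq 2$ via $r$-separation, and then invoke Lemma~\ref{lem:lipschitztoastute-multiclass} to conclude astuteness~$1$ with radius $r$.
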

\begin{proof}
	We first show that if the support of the distribution is $r$-separated, then there exists a function $f: \calX \rightarrow \R^C$ satisfying: 
	\begin{enumerate}
		\item If $\x \in \bigcup_{i \in [C]} \calX^{(i)}$, then, $f(\x)$ is $\frac{1}{r}$-locally-Lipschitz in a ball of radius $r$ around $\x$. 
		\item If $\x \in \calX^{(y)}$, then $f(\x)_j - f(\x)_y \geq 2$ for all $j \neq y$.
	\end{enumerate}
	Define the function
	$$
	f(\x) = \frac{1}{r} \cdot \left( \dist(\x,\calX^{(1)}), \ldots, \dist(\x,\calX^{(C)}) \right).
	$$
	In other words, we set $f(\x)_i = \frac{1}{r} \cdot \dist(\x,\calX^{(i)})$. 
	Then, for any $\x$, we have: 
	$$
	f(\x)_i - f(\x')_i = \frac{\dist(\x, \calX^{(i)}) - \dist(\x', \calX^{(i)})}{r} \leq \frac{\dist(\x, \x')}{r}
	$$
	where we used the triangle inequality. This establishes (a). To establish (b), suppose without loss of generality that $\x \in \calX^{(y)}$, which in particular implies that $f(\x)_y = \dist(\x, \calX^{(y)}) = 0$. Then, 
	$$
	f(\x)_j - f(\x)_y = \frac{\dist(\x, \calX^{(j)})}{r} \geq \frac{\dist(\calX^{(y)}, \calX^{(j)})}{r} \geq 2	$$ because every pair of classes has distance at least $2r$ from the $r$-separated property.
	
	Now observe that by construction, $f$ satisfies the two conditions in Lemma~\ref{lem:lipschitztoastute-multiclass} at all $\x \in \bigcup_{i \in [C]} \calX^{(i)}$. Thus, applying Lemma~\ref{lem:lipschitztoastute-multiclass}, we get that $g(\x) = \argmin_i f(\x)_i$ has astuteness $1$ with radius $r$ over any distribution over points in $\bigcup_{i \in [C]} \calX^{(i)}$.
\end{proof}

While the classifier $g$ used in the proof of Theorem~\ref{thm:existence-multiclass} resembles the 1-nearest-neighbor classifier, it is actually different on any finite sample, and the classifiers only coincide in the {\em{infinite sample limit}} or when the class supports are known.

\mypara{Binary case.}
We also state results for the special case of binary classification.
Let $\mathcal{X}= \mathcal{X}^+ \cup \mathcal{X}^-$ be the instance space with disjoint class supports $\mathcal{X}^+ \cap \mathcal{X}^- = \emptyset$. 
Then, we define $f: \mathcal{X} \to \R$ as
$$
f(\x) = \frac{\dist(\x, \mathcal{X}^-) - \dist(\x, \mathcal{X}^+)}{2r}.
$$
It is immediate that if $\calX$ is $r$-separated, then $f$ is locally Lipschitz with constant $1/r$, and also the classifier $g= \sign(f)$ achieves the guarantees in the following theorem using the next lemma.
\begin{lemma}\label{lem:lipschitztoastute}
	Let $f: \calX \rightarrow \R$, and let $\x \in \calX$ have label $y$. If (a) $f$ is $\frac{1}{r}$-Locally Lipschitz in a ball of radius $r > 0$ around $\x$, (b) $|f(\x)| \geq 1$, and (c) $g(\x)$ has the same sign as $y$, then the classifier $g = \sgn(f)$ is astute at $\x$ with radius $r$.
\end{lemma}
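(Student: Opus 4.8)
The plan is to mirror the structure of the multiclass Lemma~\ref{lem:lipschitztoastute-multiclass}, but now tracking a single real-valued $f$ and its sign rather than an argmin over $C$ coordinates. I want to show that for any $\x'$ with $\dist(\x,\x') \leq r$, the value $f(\x')$ has the same sign as $f(\x)$, so that $g(\x') = \sgn(f(\x')) = \sgn(f(\x)) = y$ throughout the ball $\bbB(\x, r)$. This immediately yields astuteness at $\x$ with radius $r$ by definition.

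First I would fix an arbitrary $\x'$ satisfying $\dist(\x,\x') \leq r$ and apply the $\frac{1}{r}$-local Lipschitz hypothesis (a), which gives $|f(\x) - f(\x')| \leq \frac{1}{r}\cdot \dist(\x,\x') \leq \frac{1}{r}\cdot r = 1$. Combined with assumption (b), namely $|f(\x)| \geq 1$, this controls how far $f(\x')$ can drift from $f(\x)$: it can move by at most $1$, while starting at magnitude at least $1$. I would then split into the two cases according to the sign of $f(\x)$. If $f(\x) \geq 1$, then $f(\x') \geq f(\x) - 1 \geq 0$, so $f(\x')$ is nonnegative; symmetrically, if $f(\x) \leq -1$, then $f(\x') \leq f(\x) + 1 \leq 0$. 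In either case $\sgn(f(\x')) = \sgn(f(\x))$.

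Using assumption (c), that $g(\x) = \sgn(f(\x))$ agrees with the label $y$, I conclude $g(\x') = \sgn(f(\x')) = \sgn(f(\x)) = y$ for every such $\x'$, which is exactly astuteness at $(\x,y)$ with radius $r$. The only minor care needed is the boundary behavior of $\sgn$ at the value $0$: because $|f(\x)|\geq 1$ is bounded strictly away from zero and $f$ moves by at most $1$ within the ball, $f(\x')$ stays weakly on the correct side, so the conclusion is robust to any fixed convention for $\sgn(0)$. The main obstacle is essentially bookkeeping rather than conceptual: one must ensure the inequality chain uses the full budget $\dist(\x,\x')\leq r$ (so that the Lipschitz bound gives exactly $1$) and that the strict separation $|f(\x)|\geq 1$ is what prevents a sign flip. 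This is the direct binary analogue of the three-inequality sandwich in the multiclass proof, just collapsed onto the real line.
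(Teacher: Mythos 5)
Your proof is correct and takes essentially the same approach as the paper: the paper states this binary lemma without proof, as it is the direct specialization of the multiclass Lemma~\ref{lem:lipschitztoastute-multiclass}, whose proof uses exactly your Lipschitz-drift-plus-margin sandwich ($f$ moves by at most $1$ over the ball, while the margin condition gives slack $\geq 1$), and your argument is precisely that specialization. The one caveat you raise---that the weak inequality $f(\x') \geq 0$ leaves the sign at the value $0$ dependent on the convention for $\sgn(0)$---is present to exactly the same degree in the paper's own multiclass proof, which likewise concludes from weak inequalities that $\argmin_i f(\x')_i = y$, so your proposal is no less rigorous than the paper's treatment.
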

\begin{theorem}\label{thm:existence}
	Suppose the data distribution $\calX = \calX^+ \cup \calX^-$ is $r$-separated. Then, there exists a function $f$ such that (a) $f$ is $\frac{1}{r}$-locally Lipschitz in a ball of radius $r$ around all $\x \in \calX$ and (b) the classifier $g = \sgn(f)$ has astuteness $1$ with radius $r$.
\end{theorem}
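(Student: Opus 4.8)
The plan is to construct an explicit function $f$ and verify the two required properties directly, mirroring the structure of the multiclass proof in Theorem~\ref{thm:existence-multiclass} but specialized to a single real-valued output. Following the construction already sketched in the excerpt, I would take
$$
f(\x) = \frac{\dist(\x, \calX^-) - \dist(\x, \calX^+)}{2r},
$$
so that $f$ is positive on (a neighborhood of) $\calX^+$ and negative on $\calX^-$, and $g = \sgn(f)$ gives the natural classifier. The goal is to show this $f$ satisfies condition (a) of being $\frac{1}{r}$-locally Lipschitz, and that together with $r$-separation it yields the hypotheses of Lemma~\ref{lem:lipschitztoastute}, from which astuteness follows.

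First I would establish local Lipschitzness. The key observation is that for any fixed set $S$, the map $\x \mapsto \dist(\x, S)$ is $1$-Lipschitz by the triangle inequality, exactly as used in the multiclass proof: $|\dist(\x,S) - \dist(\x',S)| \leq \dist(\x,\x')$. Since $f$ is a difference of two such distance functions divided by $2r$, I get $|f(\x) - f(\x')| \leq \frac{1}{2r}\left(\dist(\x,\x') + \dist(\x,\x')\right) = \frac{1}{r}\dist(\x,\x')$, which is the desired $\frac{1}{r}$-local Lipschitz bound (in fact globally, so certainly in any ball of radius $r$). This handles part (a) cleanly.

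Next I would verify the hypotheses (b) and (c) of Lemma~\ref{lem:lipschitztoastute} at every labeled point, using $r$-separation. Take $\x \in \calX^+$ with label $y = +1$ (the case $\x \in \calX^-$ is symmetric). Then $\dist(\x, \calX^+) = 0$, so $f(\x) = \frac{\dist(\x,\calX^-)}{2r}$. By $r$-separation, $\dist(\x, \calX^-) \geq \dist(\calX^+, \calX^-) \geq 2r$, hence $f(\x) \geq 1$, giving both $|f(\x)| \geq 1$ (condition (b)) and $\sgn(f(\x)) = +1 = y$ (condition (c)). With all three hypotheses of Lemma~\ref{lem:lipschitztoastute} verified at each $\x \in \calX$, the lemma immediately gives that $g = \sgn(f)$ is astute at every labeled point with radius $r$, so its astuteness under any distribution supported on $\calX^+ \cup \calX^-$ equals $1$, establishing part (b) of the theorem.

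I do not anticipate a genuine obstacle here, since the binary case is strictly simpler than the multiclass result already proved; the main point requiring care is merely bookkeeping of signs and ensuring the factor $\frac{1}{2r}$ (rather than $\frac{1}{r}$) is chosen so that both the Lipschitz constant comes out to exactly $\frac{1}{r}$ and the margin $|f(\x)| \geq 1$ holds simultaneously. The symmetric treatment of $\calX^-$ and the appeal to Lemma~\ref{lem:lipschitztoastute} being stated but not yet proved means I would, if needed, also supply its one-line argument: local Lipschitzness plus $|f(\x)| \geq 1$ forces $|f(\x')| \geq 1 - \frac{1}{r}\cdot r \cdot \tfrac{1}{1}$ to stay bounded away from the opposite sign for all $\x'$ within radius $r$, so the sign — and hence the predicted label — cannot flip.
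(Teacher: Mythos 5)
Your proposal is correct and follows essentially the same route as the paper: the same function $f(\x) = \frac{\dist(\x,\calX^-) - \dist(\x,\calX^+)}{2r}$, the same triangle-inequality argument for $\frac{1}{r}$-Lipschitzness, and the same appeal to Lemma~\ref{lem:lipschitztoastute} after using $r$-separation to get $|f(\x)| \geq 1$ with the correct sign. The paper simply declares these steps ``immediate,'' so your write-up is a faithful (and slightly more detailed) rendering of its intended argument.
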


A visualization of the function (and resulting classifier) from Theorem~\ref{thm:existence} for a binary classification dataset appears in Figure~\ref{figure:robustness-confidence}. Dark colors indicate high confidence (far from decision boundary) and lighter colors indicate the gradual change from one label to the next. The classifier $g = \sgn(f)$ guaranteed by this theorem will predict the label based on which decision region (positive or negative) is closer to the input example. Figure~\ref{fig:wig_boundary} shows a pictorial example of why using a locally Lipschitz function can be just as expressive while also being robust.

\begin{minipage}{.5\textwidth}
	\centering
	\includegraphics[width=.8\textwidth]{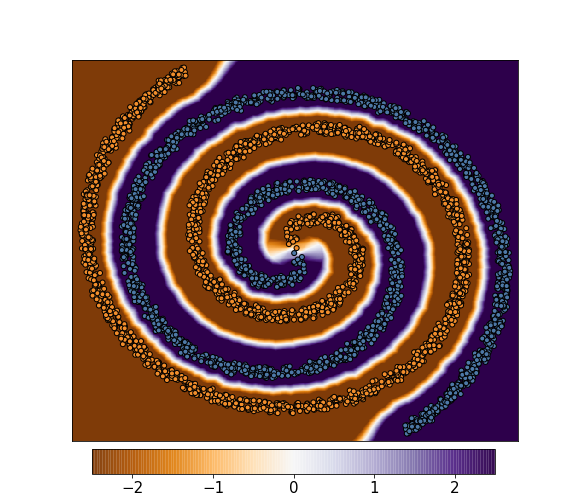}
	\captionof{figure}{
		Plot of $f(\x)$ from Theorem~\ref{thm:existence} for the spiral dataset. The classifier $g = \sign(f)$ has high accuracy and astuteness because it gradually changes near the decision boundary.\vspace{-0.2cm}}
	\label{figure:robustness-confidence}
\end{minipage}\hspace{2em}
\begin{minipage}{.45\textwidth}
	\centering
	\scalebox{0.7}{%
\tikzstyle{square} = [regular polygon,regular polygon sides=4]
\tikzstyle{posb} = [draw=YellowGreen!90,square,fill=YellowGreen!40,scale=2,fill opacity=0.6]
\tikzstyle{posnode} = [draw=YellowGreen!95,circle,fill=YellowGreen!95,scale=0.5,fill opacity=1.0]
\tikzstyle{negr} = [draw=red!80,square,fill=red!30,scale=2,fill opacity=0.6]
\tikzstyle{negnode} = [draw=red!95,circle,fill=red!95,scale=0.5,fill opacity=1.0]

\begin{tikzpicture}
  \coordinate (p1) at (1.1, 2);
  \coordinate (p2) at (2, 2.2);
  \coordinate (p3) at (-1.5, -1);
  \coordinate (p4) at (2.2, 2.5);
  \coordinate (p5) at (3.9, 0.);
  \coordinate (p6) at (0.5, 2.2);
  \coordinate (p7) at (-0.5, 1.8);
  \coordinate (p8) at (-1.3, -0.3);
  \coordinate (p9) at (-1.2, 1.5);
  \coordinate (p10) at (-0.9, 0.5);
  \coordinate (p11) at (4.0, 1);
  \coordinate (p12) at (3.5, 1.8);
  
  \coordinate (n1) at (0, 1);
  \coordinate (n2) at (1, 1);
  \coordinate (n3) at (-0.5, -0.6);
  \coordinate (n4) at (2.0, 1.4);
  \coordinate (n5) at (-0.1, 0.1);
  \coordinate (n6) at (3.0, 0.7);
  \coordinate (n7) at (0.3, 1.3);
  \coordinate (n8) at (0.8, 0.5);
  \coordinate (n9) at (2.4, -0.3);

  \node [posnode] at (p1) {};
  \node [posnode] at (p2) {};
  \node [posnode] at (p3) {};
  \node [posnode] at (p4) {};
  \node [posnode] at (p5) {};
  \node [posnode] at (p6) {};
  \node [posnode] at (p7) {};
  \node [posnode] at (p8) {};
  \node [posnode] at (p9) {};
  \node [posnode] at (p10) {};
  \node [posnode] at (p11) {};
  \node [posnode] at (p12) {};

  \node [posb] at (p1) {};
  \node [posb] at (p2) {};
  \node [posb] at (p3) {};
  \node [posb] at (p4) {};
  \node [posb] at (p5) {};
  \node [posb] at (p6) {};
  \node [posb] at (p7) {};
  \node [posb] at (p8) {};
  \node [posb] at (p9) {};
  \node [posb] at (p10) {};
  \node [posb] at (p11) {};
  \node [posb] at (p12) {};

  \node [negr] at (n1) {};
  \node [negr] at (n2) {};
  \node [negr] at (n3) {};
  \node [negr] at (n4) {};
  \node [negr] at (n5) {};
  \node [negr] at (n6) {};
  \node [negr] at (n7) {};
  \node [negr] at (n8) {};
  \node [negr] at (n9) {};

  \node [negnode] at (n1) {};
  \node [negnode] at (n2) {};
  \node [negnode] at (n3) {};
  \node [negnode] at (n4) {};
  \node [negnode] at (n5) {};
  \node [negnode] at (n6) {};
  \node [negnode] at (n7) {};
  \node [negnode] at (n8) {};
  \node [negnode] at (n9) {};

  \draw [black!90,line width=0.7mm]  (-0.85, -1.3) to[out=90,in=230] (-0.2, 1.3)
          to[out=45,in=160] (2.5, 1.6)
          to[out=340,in=90] (3.8, -1.4);

  \draw [orange!90,line width=0.7mm]  (-0.9, -1.3) to (-0.9, -0.1) to (-0.5, -0.1) to (-0.5, 1.4)
          to (-0.1, 1.4) to (-0.1, 1.75) to (0.7, 1.75) to (0.7, 1.5) to (1.55, 1.5) to (1.55, 1.8)
          to (2.8, 1.8) to (2.8, 1.2) to (3.45, 1.2) to (3.45, -1.4);
\end{tikzpicture} %
}
	
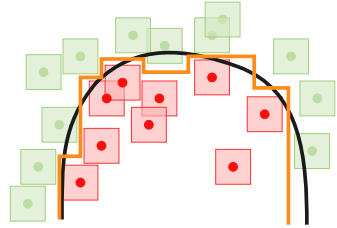
\captionof{figure}{The classifier corresponding to the orange boundary has small local Lipschitzness because it does not change in the $\ell_{\infty}$ balls around data points. The black curve, however, is vulnerable to adversarial examples even though it has high clean accuracy.}
	\label{fig:wig_boundary}
\end{minipage}

\subsection{Implications}

We consider the consequences of Table~\ref{tab:separation} and
Theorem~\ref{thm:existence-multiclass} taken together. Then, in Section~\ref{sec:validation}, we empirically explore the limitations of current robustness techniques.

\mypara{Significance for real data.} Theorem~\ref{thm:existence-multiclass} refers to supports of distributions,
while our measurements in Table~\ref{tab:separation} are on actual data. Hence, the results do not
imply perfect {\em{distributional}} accuracy and robustness. However,
our test set measurements suggest that even if the distributional supports may
be close in the infinite sample limit, the close images are rare enough that we
do not see them in the test sets. Thus, we still expect high accuracy and
robustness on these test sets. Additionally, if we are
willing to assume that the data supports are representative of the support
of the distribution, then we can conclude the existence of a
distributionally robust and accurate classifier. Combined with proof-of-concept results in
Appendix~\ref{app:proof-of-concept}, we deduce that these classifiers can be implemented by neural networks. The remaining question is how such networks can be {\em{trained}}
with existing methods.

\mypara{Optimally astute classifier and non-parametrics (comparison to~\citet{yang2020adversarial}).}
Prior work proposes adversarial pruning, a method that removes training examples until different classes are $r$-separated. They exhibit connections to maximally astute classifier, which they call the $r$-Optimal classifier for size $r$ perturbations~\cite{yang2020adversarial}. Follow-up work proved that training various non-parametric classifiers after pruning leads them to converge to maximally astute classifiers under certain conditions~\cite{bhattacharjee2020nonparametric}. Our result in Theorem~\ref{thm:existence-multiclass} complements these efforts, showing that the $r$-Optimal classifier can be obtained by the classifier in Theorem~\ref{thm:existence-multiclass}. Moreover, we provide additional justification for adversarial pruning by presenting a new perspective on the role of data separation in robustness.

\mypara{Lower bounds on test error.} Our results also corroborate some recent works that use optimal transport to estimate a lower bound on the robust test accuracy of any classifier on standard datasets. They find that it is actually zero for typical perturbation sizes~\cite{bhagoji2019lower,pydi2019adversarial}. In other words, we have further evidence that well-curated benchmark datasets are insufficient to demonstrate a tradeoff between robustness and accuracy, in contrast to predictions of an inevitable tradeoff~\cite{dohmatob2019generalized, fawzi2018adversarial, gilmer2018adversarial, tsipras2018robustness}. 

\mypara{Robustness is {\em not} inherently at odds with accuracy (comparison to~\citet{tsipras2018robustness}).} Prior work provides a
theoretical example of a data distribution where any classifier with high test
accuracy must also have small adversarial accuracy under $\ell_{\infty}$
perturbations. Their theorem led the authors to posit that (i) accuracy and
robustness may be unachievable together due their inherently opposing goals, and (ii) the training algorithm may not be at fault~\cite{tsipras2018robustness}. We
provide an alternative view.

Their distribution is defined using the following sampling process: the first feature is the binary class label
(flipped with a small probability), and the other $d-1$ features are sampled from one of two
$(d-1)$-dimensional Gaussian distributions either with mean $r$ or $-r$ depending
on the true example label. While the means are separated with distance $2r$, their
distribution is {\em{not}} $r$-separated due to the noise in the first feature combined with the infinite support of the Gaussians. Their lower bound is tight and only holds for $\ell_{\infty}$
perturbations $\epsilon$ satisfying $\epsilon \geq 2r$. 
Our experiments in Section~\ref{sec:separation} have already shown that
$r$-separation is a realistic assumption, and typical perturbations $\epsilon$
satisfy $\epsilon \ll r$. Taken together with
Theorem~\ref{thm:existence-multiclass}, we conclude that the
robustness-accuracy tradeoff in neural networks and image classification tasks
is {\em{not intrinsic}}. 
\section{A Closer Look at Existing Training Methods}
\label{sec:validation}

So far we have shown that robustness and accuracy should both be achievable in principle, but practical networks continue to trade robustness off for accuracy. We next empirically investigate why this tradeoff might arise. One plausible reason might be that existing training methods do not impose local Lipschitzness properly; another may be that they do not generalize enough. We next explore these hypotheses in more detail, considering the following questions: \vspace{-.2ex}	
\begin{itemize}
\item How locally Lipschitz are the classifiers produced by existing training methods?  \vspace{-.2ex}	
\item How well do classifiers produced by existing training methods generalize? \vspace{-.2ex}	
\end{itemize}
These questions are considered in the context of one synthetic and four real datasets, as well as several plausible training methods for improving adversarial robustness. We do not aim to achieve best performance for any method, but rather to understand smoothness and generalization.

\subsection{Experimental Methodology}

We evaluate train/test accuracy, adversarial accuracy and local lipschitzness
of neural networks trained using different methods. We also measure
generalization gaps: the difference between train and test clean accuracy (or
between train and test adversarial accuracy).

\mypara{Training Methods.} We consider neural networks trained via
Natural training (Natural),
Gradient Regularization~(GR)~\cite{finlay2019scaleable},
Locally Linear Regularization~(LLR)~\cite{qin2019adversarial},
Adversarial Training~(AT)~\cite{madry2018towards}, and
TRADES~\cite{zhang2019theoretically}. Additionally, we use Robust Self
Training~(RST)~\cite{raghunathan2020understanding}, a recently introduced
method that minimizes a linear combination of clean and robust accuracy in an
attempt to improve robustness-accuracy tradeoffs. For fair comparison between methods, we use a version of RST that only uses labeled data. Both RST and TRADES have a parameter; for RST higher
$\lambda$ means higher weight is given to the robust accuracy, while for TRADES
higher $\beta$ means higher weight given to enforcing local Lipschitzness. Details are provided in Appendix \ref{app:setup}.

\mypara{Adversarial Attacks.} We evaluate robustness with two attacks. In this section, we use Projected gradient descent (PGD)~\cite{kurakin2016adversarial} for adversarial accuracy with step size $\epsilon / 5$ and a total of $10$ steps.
The Multi-Targeted Attack (MT)~\cite{gowal2019alternative} leads to similar conclusions;  results in Appendix \ref{sec:mt_results}.

\mypara{Measuring Local Lipschitzness.} For each classifier, we empirically measure the local Lipschitzness of the underlying function by the {\em{empirical Lipschitz constant}} defined as the following quantity
\begin{equation}
\label{equ: local-Lipschitz quantity}
\frac{1}{n}\sum_{i=1}^n\max_{\x_i'\in\bbB_\infty(\x_i,\epsilon)}\frac{\|f(\x_i)-f(\x_i')\|_1}{\|\x_i-\x_i'\|_\infty}.
\end{equation}
A lower value of the empirical Lipschitz constant implies a smoother classifier. We estimate this through a PGD-like procedure, where we iteratively
take a step towards the gradient direction ($\nabla_{\x_i'}\frac{\|f(\x_i)-f(\x_i')\|_1}{\|\x_i-\x_i'\|_\infty}$)
where $\epsilon$ is the perturbation radius. We use
step size $\epsilon / 5$ and a total of $10$ steps.

\mypara{Datasets.} We evaluate the various algorithms on one synthetic dataset:
Staircase~\cite{raghunathan2019adversarial} and four real datasets: MNIST \cite{lecun1998mnist},
SVHN~\cite{netzer2011reading},
CIFAR-10 \cite{krizhevsky2009learning} and
Restricted ImageNet \cite{tsipras2018robustness}.
We consider adversarial $\ell_\infty$ perturbations for all datasets.
More details are in Appendix \ref{app:setup}.

\subsection{Observations}

Our experimental results, presented in Tables \ref{tab:mnist} and \ref{tab:cifar_res},
provide a number of insights into the smoothness and generalization
properties of classifiers trained by existing methods.

\mypara{How well do existing methods impose local Lipschitzness?} There is a large gap in the degree of local Lipschitzness in classifiers trained by AT, RST and TRADES and those trained by natural training, GR and LLR. Classifiers in the former group are considerably smoother than the latter. Classifiers produced by TRADES are the most locally Lipschitz overall, with smoothness improving with increasing $\beta$. AT and RST also produce classifiers of comparable smoothness -- but less smooth than TRADES. Overall, local Lipschitzness appears mostly correlated with adversarial accuracy; the more robust methods are also the ones that impose the highest degree of local Lipschitzness. But there are diminishing returns in the correlation between robustness {\em{and}} accuracy and local Lipschitzness; for example, the local smoothness of TRADES improves with higher $\beta$; but increasing $\beta$ sometimes leads to drops in test accuracy even though the Lipschitz constant continues to decrease. 

\begin{table}[t]
	\scriptsize
	\setlength{\tabcolsep}{2pt}
	\centering
\begin{tabular}{lc|cc|c|cc||c|cc|c|cc}
  \toprule
  architecture & \multicolumn{6}{c||}{CNN1} & \multicolumn{6}{c}{CNN2} \\
  \midrule
  {} &  \thead{train \\ acc.} &  \thead{test \\ acc.} &  \thead{adv test \\ acc.} &  \thead{test \\ lipschitz} &   gap &  \thead{adv \\ gap} &  \thead{train \\ acc.} &  \thead{test \\ acc.} &  \thead{adv test \\ acc.} &  \thead{test \\ lipschitz} &  gap &  \thead{adv \\ gap} \\
  \midrule
  Natural           &                     100.00 &                     99.20 &                         59.83 &                      67.25 &  0.80 &                0.45 &                     100.00 &     99.51 &     86.01 &   23.06 & 0.49 &               -0.28 \\
  GR                &                      99.99 &                     99.29 &                         91.03 &                      26.05 &  0.70 &                3.49 &                      99.99 &     99.55 &     93.71 &   20.26 & 0.44 &                2.55 \\
  LLR               &                     100.00 &                     99.43 &                         92.14 &                      30.44 &  0.57 &                4.42 &                     100.00 &     99.57 &     95.13 &    9.75 & 0.43 &                2.28 \\
  AT                &                      99.98 &                     99.31 &                         97.21 &                       8.84 &  0.67 &                2.67 &                      99.98 &     99.48 &     98.03 &    6.09 & 0.50 &                1.92 \\
  RST($\lambda$=.5)   &                     100.00 &                     99.34 &                         96.53 &                      11.09 &  0.66 &                3.16 &                     100.00 &     99.53 &     97.72 &    8.27 & 0.47 &                2.27 \\
  RST($\lambda$=1)    &                     100.00 &                     99.31 &                         96.96 &                      11.31 &  0.69 &                2.95 &                     100.00 &     99.55 &     98.27 &    6.26 & 0.45 &                1.73 \\
  RST($\lambda$=2)    &                     100.00 &                     99.31 &                         97.09 &                      12.39 &  0.69 &                2.87 &                     100.00 &     99.56 &     98.48 &    4.55 & 0.44 &                1.52 \\
  TRADES($\beta$=1) &                      99.81 &                     99.26 &                         96.60 &                       9.69 &  0.55 &                2.10 &                      99.96 &     99.58 &     98.10 &    4.74 & 0.38 &                1.70 \\
  TRADES($\beta$=3) &                      99.21 &                     98.96 &                         96.66 &                       7.83 &  0.25 &                1.33 &                      99.80 &     99.57 &     98.54 &    2.14 & 0.23 &                1.18 \\
  TRADES($\beta$=6) &                      97.50 &                     97.54 &                         93.68 &                       2.87 & -0.04 &                0.37 &                      99.61 &     99.59 &     98.73 &    1.36 & 0.02 &                0.80 \\
  \bottomrule
\end{tabular} %
\vspace{1em}
	\caption{MNIST (perturbation 0.1). We compare two networks: CNN1 (smaller) and CNN2 (larger). We evaluate adversarial accuracy with the PGD-10 attack and compute Lipschitzness with Eq. (\ref{equ: local-Lipschitz quantity}). We also report the standard and adversarial generalization gaps. \vspace{-0.5em}}. 
	\label{tab:mnist}
\end{table}

\begin{table}[t!]
	\scriptsize
	\setlength{\tabcolsep}{4pt}
	\centering
\begin{tabular}{l|c|cc|c|cc||c|cc|c|cc}
  \toprule
  {}  & \multicolumn{6}{c||}{CIFAR-10} &
        \multicolumn{6}{c}{Restricted ImageNet} \\
  \midrule
  {} & \thead{train \\ acc.} &  \thead{test \\ acc.} &  \thead{adv test \\ acc.} &  \thead{test \\ lipschitz} &  \thead{gap} &  \thead{adv \\ gap} &
       \thead{train \\ acc.} &  \thead{test \\ acc.} &  \thead{adv test \\ acc.} &  \thead{test \\ lipschitz} &  \thead{gap} &  \thead{adv \\ gap} \\
  \midrule
  Natural           & 100.00 &      93.81 &    0.00 &  425.71 &   6.19 &  0.00 &   97.72 &   93.47 &        7.89 &   32228.51 &  4.25 & -0.46\\
  GR                &  94.90 &      80.74 &   21.32 &   28.53 &  14.16 &  3.94 &   91.12 &   88.51 &       62.14 &     886.75 &  2.61 &  0.19\\
  LLR               & 100.00 &      91.44 &   22.05 &   94.68 &   8.56 &  4.50 &   98.76 &   93.44 &       52.62 &    4795.66 &  5.32 &  0.22\\
  RST($\lambda$=.5) &  99.90 &      85.11 &   39.58 &   20.67 &  14.79 & 36.26 &   96.08 &   92.02 &       79.24 &     451.57 &  4.06 &  4.57\\
  RST($\lambda$=1)  &  99.86 &      84.61 &   40.89 &   23.15 &  15.25 & 41.31 &   95.66 &   92.06 &       79.69 &     355.43 &  3.61 &  4.67\\
  RST($\lambda$=2)  &  99.73 &      83.87 &   41.75 &   23.80 &  15.86 & 43.54 &   96.02 &   91.14 &       81.41 &     394.40 &  4.87 &  6.19\\
  AT                &  99.84 &      83.51 &   43.51 &   26.23 &  16.33 & 49.94 &   96.22 &   90.33 &       82.25 &     287.97 &  5.90 &  8.23\\
  TRADES($\beta$=1) &  99.76 &      84.96 &   43.66 &   28.01 &  14.80 & 44.60 &   97.39 &   92.27 &       79.90 &    2144.66 &  5.13 &  6.66\\
  TRADES($\beta$=3) &  99.78 &      85.55 &   46.63 &   22.42 &  14.23 & 47.67 &   95.74 &   90.75 &       82.28 &     396.67 &  5.00 &  6.41\\
  TRADES($\beta$=6) &  98.93 &      84.46 &   48.58 &   13.05 &  14.47 & 42.65 &   93.34 &   88.92 &       82.13 &     200.90 &  4.42 &  5.31\\
  \bottomrule
\end{tabular}
\vspace{1em}
	\caption{CIFAR-10 (perturbation 0.031) and Restricted ImageNet (perturbation 0.005). We evaluate adversarial accuracy with the PGD-10 attack and compute Lipschitzness with Eq. (\ref{equ: local-Lipschitz quantity}). \vspace{-0.5em}}
	\label{tab:cifar_res}
\end{table}

\mypara{How well do existing methods generalize?} We observe that for the methods that produce locally Lipschitz classifiers -- namely, AT, TRADES and RST -- also have large generalization gaps while natural training, GR and LLR generalize much better. In particular, there is a large gap between training and test accuracies of AT, RST and TRADES, and an even larger one between training and test adversarial accuracies. Although RST has better test accuracy than AT, it continues to have a large generalization gap with only labeled data. An interesting fact is that this generalization behaviour is quite unlike linear classification, where imposing local Lipschitzness leads to higher margin and better generalization~\cite{xu2009robustness} -- imposing local Lipschitzness in neural networks, at least through these methods, appears to hurt generalization instead of helping.  This suggests that these robust training methods may not be generalizing properly.

\subsection{A Closer Look at Generalization}

A natural follow-up question is whether the generalization gap of existing methods can be reduced by existing generalization-promoting methods in deep learning. In particular, we ask: {\em Can we improve the generalization gap of AT, RST and TRADES through generalization tools?}

To better understand this question, we consider two medium-sized  datasets,
SVHN and CIFAR-10, which nevertheless have a reasonably high gap between the
test accuracy of the model produced by natural training and the best robust
model. We then experiment with dropout~\cite{srivastava2014dropout}, a
standard and highly effective generalization method. For SVHN, we use a dropout
rate of $0.5$ and for CIFAR-10 a rate of $0.2$. More experimental details are
provided in the Appendix~\ref{app:setup}. 

Table~\ref{tab:svhn_dropout} shows the results,
contrasted with standard training. We observe that dropout narrows the
generalization gap between training and test accuracy, as well as adversarial
training and test accuracy significantly for all methods.  For SVHN, after
incorporating dropout, the best test accuracy is achieved by RST ($95.19\%$) along with an
adversarial test accuracy of $55.22\%$; the best adversarial test accuracy
($62.41\%$) is with TRADES ($\beta = 3$) along with a test accuracy of
($94.10\%$). Both accuracies are much closer to the accuracy of natural
training ($96.66\%$), and the test adversarial accuracies are also
significantly higher.  A similar narrowing of the generalization gap is visible
for CIFAR-10 as well.  Dropout also appears to make the networks smoother as
test Lipschitzness also appears to improve for all algorithms for SVHN, and for
TRADES for CIFAR-10. 

\begin{table}[t!]
	\scriptsize
	\setlength{\tabcolsep}{4pt}
	\centering
\begin{tabular}{lc|cc|c|cc||cc|c|cc}
  \toprule
  {}  & {} & \multicolumn{5}{c||}{SVHN} &
             \multicolumn{5}{c}{CIFAR-10} \\
  \midrule
  {} &  \thead{dropout} & \thead{test \\ acc.} &  \thead{adv test \\ acc.} &  \thead{test \\ lipschitz} &  \thead{gap} &  \thead{adv \\ gap}
                        & \thead{test \\ acc.} &  \thead{adv test \\ acc.} &  \thead{test \\ lipschitz} &  \thead{gap} &  \thead{adv \\ gap} \\
  \midrule
  Natural           &    False &    95.85 &   2.66 & 149.82 &   4.15 &   0.87  &  93.81 &    0.00 &425.71 &   6.19 &   0.00  \\
  Natural           &     True &    96.66 &   1.52 & 152.38 &   3.34 &   1.22  &  93.87 &    0.00 &384.48 &   6.13 &   0.00  \\
  \midrule
  AT                &    False &    91.68 &  54.17 &  16.51 &   5.11 &  25.74  &  83.51 &   43.51 & 26.23 &  16.33 &  49.94  \\
  AT                &     True &    93.05 &  57.90 &  11.68 &  -0.14 &   6.48  &  85.20 &   43.07 & 31.59 &  14.51 &  44.05  \\
  \midrule
  RST($\lambda$=2)  &    False &    92.39 &  51.39 &  23.17 &   6.86 &  36.02  &  83.87 &   41.75 & 23.80 &  15.86 &  43.54  \\
  RST($\lambda$=2)  &     True &    95.19 &  55.22 &  17.59 &   1.90 &  11.30  &  85.49 &   40.24 & 34.45 &  14.00 &  33.07  \\
  \midrule
  TRADES($\beta$=3) &    False &    91.85 &  54.37 &  10.15 &   7.48 &  33.33  &  85.55 &   46.63 & 22.42 &  14.23 &  47.67  \\
  TRADES($\beta$=3) &     True &    94.00 &  62.41 &   4.99 &   0.48 &   7.91  &  86.43 &   49.01 & 14.69 &  12.59 &  35.03  \\
  \midrule
  TRADES($\beta$=6) &    False &    91.83 &  58.12 &   5.20 &   5.35 &  23.88  &  84.46 &   48.58 & 13.05 &  14.47 &  42.65  \\
  TRADES($\beta$=6) &     True &    93.46 &  63.24 &   3.30 &   0.45 &   5.97  &  84.69 &   52.32 &  8.13 &  11.91 &  26.49  \\
  \bottomrule
\end{tabular}
\vspace{1em}
	\caption{Dropout and generalization.
		SVHN (perturbation 0.031, dropout rate $0.5$) and CIFAR-10 (perturbation 0.031, dropout rate $0.2$).
		We evaluate adversarial accuracy with the PGD-10 attack and compute Lipschitzness with Eq. (\ref{equ: local-Lipschitz quantity}). \vspace{-1em}}
	\label{tab:svhn_dropout}
\end{table}

\mypara{Dropout Improvements.} Our results show that the
generalization gap of AT, RST and TRADES can be reduced by adding
dropout; this reduction is particularly effective for RST and
TRADES. Dropout additionally decreases the test local
Lipschitzness of all methods -- and hence promotes generalization all round --
in accuracy, adversarial accuracy, and also local Lipschitzness. This suggests
that combining dropout with the robust methods may be a good strategy for
overall generalization. 

\subsection{Implications}

Our experimental results lead to three major observations. We see that the
training methods that produce the smoothest and most robust classifiers are AT,
RST and TRADES.  However, these robust methods also do not generalize well, and
the generalization gap narrows when we add dropout.

\mypara{Comparison with~\citet{rice2020overfitting}.} An important implication of our results is that generalization is a particular challenge for existing robust methods. The fact that AT 
may sometimes overfit has been previously observed by
 \cite{raghunathan2019adversarial,rice2020overfitting,su2018robustness}; in particular, \citet{rice2020overfitting} also experiments with a few
generalization  methods (but {\em{not}} dropout) and observes that only
early stopping helps overcome overfitting to a certain degree. We 
expand the scope of these results to show that RST and TRADES also suffer from
large generalization gaps, and that dropout can help narrow the gap in these two methods. Furthermore, we demonstrate that dropout often decreases the local Lipschitz constant.

\section{Related Work}

There is a large body of literature on developing adversarial attacks as well
as defenses that apply to neural networks~\citep{carliniwagner, bbox2,
szegedy2013intriguing, MeekLowd2005, madry2018towards,  papernot2017,
dt-papernotspapernot2015distillation,duchi18, wang2020improving, zhang2020attacks}. While some of this work has
noted that an increase in robustness is sometimes accompanied by a loss in
accuracy, the phenomenon remains ill-understood. 
\citet{raghunathan2019adversarial} provides a synthetic problem where
adversarial training overfits, which we take a closer look at in Appendix \ref{app:more_exp}.
\citet{raghunathan2020understanding} proposes the robust self training method
that aims to improve the robustness and accuracy tradeoff; however, our
experiments show that they do not completely close the gap particularly when
using only labeled data. 

Outside of neural networks, prior works suggest that lack of data
may be responsible for low robustness~\cite{alayrac2019labels, bhattacharjee2020nonparametric, carmon2019unlabeled, chen2020adversarial, li2020toward, schmidt2018adversarially, sehwag2019analyzing,wang2018analyzing, zhang2019bottleneck}. For example, \citet{schmidt2018adversarially}
provides an example of a linear classification problem where robustness in
$\ell_{\infty}$ norm requires more samples than plain accuracy,
and \citet{wang2018analyzing} shows that nearest neighbors would be more robust
with more data. Some prior works also suggest that the robustness accuracy
tradeoff may arise due to limitations of existing algorithms.
\citet{bubeck2018adversarial} provides an example where finding a robust and
accurate classifier is significantly more computationally challenging than
finding one that is simply accurate, and \citet{bhattacharjee2020nonparametric} shows that certain non-parametric methods do not lead to robust classifiers in the large sample limit. It is also known that the Bayes optimal classifier is not robust in general~\cite{bhagoji2019lower,richardson2020bayes,pydi2019adversarial,	tsipras2018robustness,wang2018analyzing}, and it differs from the maximally astute classifier~\cite{bhattacharjee2020nonparametric, yang2020adversarial}. 

Prior work has also shown a connection between adversarial robustness and local
or global Lipschitzness. For linear classifiers, it is known that imposing
Lipschitzness reduces to bounding the norm of the classifier, which in turn
implies large margin~\cite{luxburg2004distance,xu2009robustness}. Thus, for linear
classification of data that is linearly separable with a large margin, imposing
Lipschitzness {\em{helps}} generalization.  

For neural networks,
\citet{anil2018sorting,Qian2018L2NonexpansiveNN,huster2018limitations} provide
methods for imposing global Lipschitzness constraints; however, the
state-of-the-art methods for training such networks do not lead to highly
expressible functions. For local Lipschitzness, \citet{hein2017formal} first
showed a relationship between adversarial robustness of a classifier and local
Lipschitzness of its underlying function. Following this,
\citet{weng2018evaluating} provides an efficient way of calculating a lower
bound on the local Lipschitzness coefficient.
Many works consider a randomized notion of local smoothness, and they prove that enforcing
it can lead to certifiably robust classifiers~\cite{li2018certified,cohen2019certified,pinot2019theoretical,salman2019provably}.

\section{Conclusion}

Motivated by understanding when it is possible to achieve both accuracy and robustness, we take a closer look at robustness in image classification and make several observations. We show that many image datasets follow a natural separation property and that this implies the existence of a robust and perfectly accurate classifier that can be obtained by rounding a locally Lipschitz function. Thus in principle robustness and accuracy should both be achievable together on real world datasets.

We investigate the gap between theory and practice by examining the smoothness and generalization properties of existing training methods. Our results show that generalization may be a particular challenge in robust learning since all robust methods that produce locally smooth classifiers also suffer from fairly large generalization gaps. We then experiment with combining robust classification methods with dropout and see that this leads to a narrowing of the generalization gaps.  

Our results suggest that the robustness-accuracy tradeoff in deep learning is not inherent, but it is rather a consequence of current methods for training robust networks. Future progress that ensures both robustness and accuracy may come from redesigning other aspects of the training process, such as customized optimization methods~\cite{awasthi2019robustness, garg2018spectral, kurakin2016adversarial, ororbia2017unifying,  rice2020overfitting, shafahi2019adversarial, singh2018fast, shaham2018understanding} or better network architectures~\cite{fernando2017pathnet, saxena2016convolutional, zoph2016neural} in combination with loss functions that encourage adversarial robustness, generalization, and local Lipschitzness. Some recent evidence for improved network architectures has been obtained by Guo et. al.~\cite{guo2019meets}, who search for newer architectures with higher robustness from increased model capacity and feature denoising. A promising direction is to combine such efforts across the deep learning stack to reduce standard and adversarial generalization gaps.

\section*{Broader Impact}

In this paper we have investigated when it is possible to achieve both high accuracy and adversarial robustness on standard image classification datasets. Our motivation is partially to offer an alternative perspective to previous work that speculates on the inevitability of an accuracy-robustness tradeoff. In practice, if there were indeed a tradeoff, then robust machine learning technology is unlikely to be very useful. The vast majority of instances encountered by practical systems will be natural examples, whereas adversaries are few and far between. A self-driving car will mostly come across regular street signs and rarely come across adversarial ones. If increased robustness necessitates a loss in performance on natural examples, then the system’s designer might be tempted to use a highly accurate classifier that is obtained through regular training and forgo robustness altogether. For adversarially robust machine learning to be widely adopted, accuracy needs to be achieved in conjunction with robustness.

While we have backed up our theoretical results by empirically verifying dataset separation, we are also ready to point out the many limitations of current robustness studies. The focus on curated benchmarks may lead to a false sense of security. Real life scenarios will likely involve much more complicated classification tasks. For example, the identification of criminal activity or the maneuvering of self-driving cars depend on a much broader notion of robustness than has been studied so far. Perturbations in $\ell_p$ distance cover only a small portion of the space of possible attacks.

Looking toward inherent biases, we observe that test accuracy is typically aggregated over all classes, and hence, it does not account for underrepresentation. For example, if a certain class makes up a negligible fraction of the dataset, then misclassifying these instances may be unnoticeable when we expect a drop in overall test accuracy. A more stringent objective would be to retain accuracy on each separate class, as well as being robust to targeted perturbations that may exploit dataset imbalance.

On a more positive note, we feel confident that developing a theoretically grounded discussion of robustness will encourage machine learning engineers to question the efficacy of various methods. As one of our contributions, we have shown that dataset separation guarantees the existence of an accurate and robust classifier. We believe that future work will develop new methods that achieve robustness by imposing both Lipschitzness and effective generalization. Overall, it is paramount to close the theory-practice gap by working on both sides, and we stand by our suggestion to further investigate the various deep learning components (architecture, loss function, training method, etc) that may compound the perceived gains in robustness and accuracy.
 
\begin{ack}
Kamalika Chaudhuri and Yao-Yuan Yang thank NSF under CIF 1719133, CNS 1804829 and IIS 1617157 for support.
Hongyang Zhang was supported in part by the Defense Advanced Research
Projects Agency under cooperative agreement HR00112020003.
The views expressed in this work do not necessarily reflect the position or
the policy of the Government and no official endorsement should be inferred.
Approved for public release; distribution is unlimited.
This work was also supported in part by NSF IIS1763562 and ONR Grant N000141812861.
\end{ack}

\bibliographystyle{plainnat}
\bibliography{reference}

\begin{thebibliography}{69}
\providecommand{\natexlab}[1]{#1}
\providecommand{\url}[1]{\texttt{#1}}
\expandafter\ifx\csname urlstyle\endcsname\relax
  \providecommand{\doi}[1]{doi: #1}\else
  \providecommand{\doi}{doi: \begingroup \urlstyle{rm}\Url}\fi

\bibitem[Alayrac et~al.(2019)Alayrac, Uesato, Huang, Fawzi, Stanforth, and
  Kohli]{alayrac2019labels}
Jean-Baptiste Alayrac, Jonathan Uesato, Po-Sen Huang, Alhussein Fawzi, Robert
  Stanforth, and Pushmeet Kohli.
\newblock Are labels required for improving adversarial robustness?
\newblock In \emph{Advances in Neural Information Processing Systems}, pages
  12192--12202, 2019.

\bibitem[Anil et~al.(2019)Anil, Lucas, and Grosse]{anil2018sorting}
Cem Anil, James Lucas, and Roger Grosse.
\newblock Sorting out lipschitz function approximation.
\newblock In \emph{International Conference on Machine Learning}, 2019.

\bibitem[Awasthi et~al.(2019)Awasthi, Dutta, and
  Vijayaraghavan]{awasthi2019robustness}
Pranjal Awasthi, Abhratanu Dutta, and Aravindan Vijayaraghavan.
\newblock On robustness to adversarial examples and polynomial optimization.
\newblock In \emph{Advances in Neural Information Processing Systems}, pages
  13737--13747, 2019.

\bibitem[Bhagoji et~al.(2019)Bhagoji, Cullina, and Mittal]{bhagoji2019lower}
Arjun~Nitin Bhagoji, Daniel Cullina, and Prateek Mittal.
\newblock Lower bounds on adversarial robustness from optimal transport.
\newblock In \emph{Advances in Neural Information Processing Systems}, pages
  7496--7508, 2019.

\bibitem[Bhattacharjee and Chaudhuri(2020)]{bhattacharjee2020nonparametric}
Robi Bhattacharjee and Kamalika Chaudhuri.
\newblock When are non-parametric methods robust?
\newblock \emph{arXiv preprint arXiv:2003.06121}, 2020.

\bibitem[Bubeck et~al.(2018)Bubeck, Price, and
  Razenshteyn]{bubeck2018adversarial}
S{\'e}bastien Bubeck, Eric Price, and Ilya Razenshteyn.
\newblock Adversarial examples from computational constraints.
\newblock \emph{arXiv preprint arXiv:1805.10204}, 2018.

\bibitem[Carlini and Wagner(2017)]{carliniwagner}
Nicholas Carlini and David Wagner.
\newblock Towards evaluating the robustness of neural networks.
\newblock \emph{IEEE Symposium on Security and Privacy}, 2017.

\bibitem[Carmon et~al.(2019)Carmon, Raghunathan, Schmidt, Duchi, and
  Liang]{carmon2019unlabeled}
Yair Carmon, Aditi Raghunathan, Ludwig Schmidt, John~C Duchi, and Percy~S
  Liang.
\newblock Unlabeled data improves adversarial robustness.
\newblock In \emph{Advances in Neural Information Processing Systems}, pages
  11190--11201, 2019.

\bibitem[Chen et~al.(2020)Chen, Liu, Chang, Cheng, Amini, and
  Wang]{chen2020adversarial}
Tianlong Chen, Sijia Liu, Shiyu Chang, Yu~Cheng, Lisa Amini, and Zhangyang
  Wang.
\newblock Adversarial robustness: From self-supervised pre-training to
  fine-tuning.
\newblock \emph{arXiv preprint arXiv:2003.12862}, 2020.

\bibitem[Cohen et~al.(2019)Cohen, Rosenfeld, and Kolter]{cohen2019certified}
Jeremy Cohen, Elan Rosenfeld, and Zico Kolter.
\newblock Certified adversarial robustness via randomized smoothing.
\newblock In \emph{International Conference on Machine Learning}, pages
  1310--1320, 2019.

\bibitem[Dohmatob(2019)]{dohmatob2019generalized}
Elvis Dohmatob.
\newblock Generalized no free lunch theorem for adversarial robustness.
\newblock In \emph{International Conference on Machine Learning}, pages
  1646--1654, 2019.

\bibitem[Fawzi et~al.(2018)Fawzi, Fawzi, and Fawzi]{fawzi2018adversarial}
Alhussein Fawzi, Hamza Fawzi, and Omar Fawzi.
\newblock Adversarial vulnerability for any classifier.
\newblock In \emph{Advances in Neural Information Processing Systems}, pages
  1186--1195, 2018.

\bibitem[Fernando et~al.(2017)Fernando, Banarse, Blundell, Zwols, Ha, Rusu,
  Pritzel, and Wierstra]{fernando2017pathnet}
Chrisantha Fernando, Dylan Banarse, Charles Blundell, Yori Zwols, David Ha,
  Andrei~A Rusu, Alexander Pritzel, and Daan Wierstra.
\newblock Pathnet: Evolution channels gradient descent in super neural
  networks.
\newblock \emph{arXiv preprint arXiv:1701.08734}, 2017.

\bibitem[Finlay and Oberman(2019)]{finlay2019scaleable}
Chris Finlay and Adam~M Oberman.
\newblock Scaleable input gradient regularization for adversarial robustness.
\newblock \emph{arXiv preprint arXiv:1905.11468}, 2019.

\bibitem[Garg et~al.(2018)Garg, Sharan, Zhang, and Valiant]{garg2018spectral}
Shivam Garg, Vatsal Sharan, Brian Zhang, and Gregory Valiant.
\newblock A spectral view of adversarially robust features.
\newblock In \emph{Advances in Neural Information Processing Systems}, pages
  10138--10148, 2018.

\bibitem[Gilmer et~al.(2018)Gilmer, Metz, Faghri, Schoenholz, Raghu,
  Wattenberg, and Goodfellow]{gilmer2018adversarial}
Justin Gilmer, Luke Metz, Fartash Faghri, Samuel~S Schoenholz, Maithra Raghu,
  Martin Wattenberg, and Ian Goodfellow.
\newblock Adversarial spheres.
\newblock \emph{arXiv preprint arXiv:1801.02774}, 2018.

\bibitem[Goodfellow et~al.(2015)Goodfellow, Shlens, and
  Szegedy]{goodfellow6572explaining}
Ian~J Goodfellow, Jonathon Shlens, and Christian Szegedy.
\newblock Explaining and harnessing adversarial examples.
\newblock In \emph{International Conference on Learning Representations}, 2015.

\bibitem[Gowal et~al.(2019)Gowal, Uesato, Qin, Huang, Mann, and
  Kohli]{gowal2019alternative}
Sven Gowal, Jonathan Uesato, Chongli Qin, Po-Sen Huang, Timothy Mann, and
  Pushmeet Kohli.
\newblock An alternative surrogate loss for {PGD}-based adversarial testing.
\newblock \emph{arXiv preprint arXiv:1910.09338}, 2019.

\bibitem[Guo et~al.(2019)Guo, Yang, Xu, and Liu]{guo2019meets}
Minghao Guo, Yuzhe Yang, Rui Xu, and Ziwei Liu.
\newblock When nas meets robustness: In search of robust architectures against
  adversarial attacks.
\newblock \emph{arXiv preprint arXiv:1911.10695}, 2019.

\bibitem[He et~al.(2016)He, Zhang, Ren, and Sun]{he2016deep}
Kaiming He, Xiangyu Zhang, Shaoqing Ren, and Jian Sun.
\newblock Deep residual learning for image recognition.
\newblock In \emph{IEEE conference on computer vision and pattern recognition},
  pages 770--778, 2016.

\bibitem[Hein and Andriushchenko(2017)]{hein2017formal}
Matthias Hein and Maksym Andriushchenko.
\newblock Formal guarantees on the robustness of a classifier against
  adversarial manipulation.
\newblock In \emph{Advances in Neural Information Processing Systems}, pages
  2266--2276, 2017.

\bibitem[Huster et~al.(2018)Huster, Chiang, and Chadha]{huster2018limitations}
Todd Huster, Cho-Yu~Jason Chiang, and Ritu Chadha.
\newblock Limitations of the lipschitz constant as a defense against
  adversarial examples.
\newblock In \emph{Joint European Conference on Machine Learning and Knowledge
  Discovery in Databases}, pages 16--29, 2018.

\bibitem[Kingma and Ba(2014)]{kingma2014adam}
Diederik~P Kingma and Jimmy Ba.
\newblock Adam: A method for stochastic optimization.
\newblock \emph{arXiv preprint arXiv:1412.6980}, 2014.

\bibitem[Krizhevsky et~al.(2009)]{krizhevsky2009learning}
Alex Krizhevsky et~al.
\newblock Learning multiple layers of features from tiny images.
\newblock 2009.

\bibitem[Kurakin et~al.(2017)Kurakin, Goodfellow, and
  Bengio]{kurakin2016adversarial}
Alexey Kurakin, Ian Goodfellow, and Samy Bengio.
\newblock Adversarial machine learning at scale.
\newblock In \emph{International Conference on Learning Representations}, 2017.

\bibitem[LeCun()]{lecun1998mnist}
Yann LeCun.
\newblock The mnist database of handwritten digits.
\newblock \emph{http://yann. lecun. com/exdb/mnist/}.

\bibitem[Li et~al.(2018)Li, Chen, Wang, and Carin]{li2018certified}
Bai Li, Changyou Chen, Wenlin Wang, and Lawrence Carin.
\newblock Certified adversarial robustness with additive gaussian noise.
\newblock \emph{arXiv preprint arXiv:1809.03113}, 2018.

\bibitem[Li et~al.(2020)Li, Wu, Feng, Fan, Jiang, Li, and Xia]{li2020toward}
Yiming Li, Baoyuan Wu, Yan Feng, Yanbo Fan, Yong Jiang, Zhifeng Li, and Shutao
  Xia.
\newblock Toward adversarial robustness via semi-supervised robust training.
\newblock \emph{arXiv preprint arXiv:2003.06974}, 2020.

\bibitem[Liu et~al.(2017)Liu, Chen, Liu, and Song]{bbox2}
Yanpei Liu, Xinyun Chen, Chang Liu, and Dawn Song.
\newblock Delving into transferable adversarial examples and black-box attacks.
\newblock \emph{ICLR}, 2017.

\bibitem[Lowd and Meek(2005)]{MeekLowd2005}
Daniel Lowd and Christopher Meek.
\newblock Adversarial learning.
\newblock In \emph{SIGKDD}, pages 641--647, 2005.

\bibitem[Luxburg and Bousquet(2004)]{luxburg2004distance}
Ulrike~von Luxburg and Olivier Bousquet.
\newblock Distance-based classification with {Lipschitz} functions.
\newblock \emph{Journal of Machine Learning Research}, 5:\penalty0 669--695,
  2004.

\bibitem[Madry et~al.(2018)Madry, Makelov, Schmidt, Tsipras, and
  Vladu]{madry2018towards}
Aleksander Madry, Aleksandar Makelov, Ludwig Schmidt, Dimitris Tsipras, and
  Adrian Vladu.
\newblock Towards deep learning models resistant to adversarial attacks.
\newblock In \emph{International Conference on Learning Representations}, 2018.

\bibitem[Netzer et~al.(2011)Netzer, Wang, Coates, Bissacco, Wu, and
  Ng]{netzer2011reading}
Yuval Netzer, Tao Wang, Adam Coates, Alessandro Bissacco, Bo~Wu, and Andrew~Y
  Ng.
\newblock Reading digits in natural images with unsupervised feature learning.
\newblock 2011.

\bibitem[Ororbia~II et~al.(2017)Ororbia~II, Kifer, and
  Giles]{ororbia2017unifying}
Alexander~G Ororbia~II, Daniel Kifer, and C~Lee Giles.
\newblock Unifying adversarial training algorithms with data gradient
  regularization.
\newblock \emph{Neural computation}, 29\penalty0 (4):\penalty0 867--887, 2017.

\bibitem[Papernot et~al.(2015)Papernot, McDaniel, Wu, Jha, and
  Swami]{dt-papernotspapernot2015distillation}
Nicolas Papernot, Patrick McDaniel, Xi~Wu, Somesh Jha, and Ananthram Swami.
\newblock Distillation as a defense to adversarial perturbations against deep
  neural networks.
\newblock \emph{arXiv preprint arXiv:1511.04508}, 2015.

\bibitem[Papernot et~al.(2017)Papernot, McDaniel, Goodfellow, Jha, Celik, and
  Swami]{papernot2017}
Nicolas Papernot, Patrick McDaniel, Ian Goodfellow, Somesh Jha, Berkay Celik,
  and Ananthram Swami.
\newblock Practical black-box attacks against deep learning systems using
  adversarial examples.
\newblock In \emph{ASIACCS}, 2017.

\bibitem[Pinot et~al.(2019)Pinot, Meunier, Araujo, Kashima, Yger, Gouy-Pailler,
  and Atif]{pinot2019theoretical}
Rafael Pinot, Laurent Meunier, Alexandre Araujo, Hisashi Kashima, Florian Yger,
  C{\'e}dric Gouy-Pailler, and Jamal Atif.
\newblock Theoretical evidence for adversarial robustness through
  randomization: the case of the exponential family.
\newblock \emph{arXiv preprint arXiv:1902.01148}, 2019.

\bibitem[Pydi and Jog(2019)]{pydi2019adversarial}
Muni~Sreenivas Pydi and Varun Jog.
\newblock Adversarial risk via optimal transport and optimal couplings.
\newblock \emph{arXiv preprint arXiv:1912.02794}, 2019.

\bibitem[Qian and Wegman(2018)]{Qian2018L2NonexpansiveNN}
Haifeng Qian and Mark~N. Wegman.
\newblock L2-nonexpansive neural networks.
\newblock \emph{ArXiv}, abs/1802.07896, 2018.

\bibitem[Qin et~al.(2019)Qin, Martens, Gowal, Krishnan, Dvijotham, Fawzi, De,
  Stanforth, and Kohli]{qin2019adversarial}
Chongli Qin, James Martens, Sven Gowal, Dilip Krishnan, Krishnamurthy
  Dvijotham, Alhussein Fawzi, Soham De, Robert Stanforth, and Pushmeet Kohli.
\newblock Adversarial robustness through local linearization.
\newblock In \emph{Advances in Neural Information Processing Systems}, pages
  13824--13833, 2019.

\bibitem[Raghunathan et~al.(2019)Raghunathan, Xie, Yang, Duchi, and
  Liang]{raghunathan2019adversarial}
Aditi Raghunathan, Sang~Michael Xie, Fanny Yang, John~C Duchi, and Percy Liang.
\newblock Adversarial training can hurt generalization.
\newblock \emph{arXiv preprint arXiv:1906.06032}, 2019.

\bibitem[Raghunathan et~al.(2020)Raghunathan, Xie, Yang, Duchi, and
  Liang]{raghunathan2020understanding}
Aditi Raghunathan, Sang~Michael Xie, Fanny Yang, John Duchi, and Percy Liang.
\newblock Understanding and mitigating the tradeoff between robustness and
  accuracy.
\newblock \emph{arXiv preprint arXiv:2002.10716}, 2020.

\bibitem[Rice et~al.(2020)Rice, Wong, and Kolter]{rice2020overfitting}
Leslie Rice, Eric Wong, and J~Zico Kolter.
\newblock Overfitting in adversarially robust deep learning.
\newblock \emph{arXiv preprint arXiv:2002.11569}, 2020.

\bibitem[Richardson and Weiss(2020)]{richardson2020bayes}
Eitan Richardson and Yair Weiss.
\newblock A bayes-optimal view on adversarial examples.
\newblock \emph{arXiv preprint arXiv:2002.08859}, 2020.

\bibitem[Ross and Doshi-Velez(2017)]{ross2017improving}
Andrew~Slavin Ross and Finale Doshi-Velez.
\newblock Improving the adversarial robustness and interpretability of deep
  neural networks by regularizing their input gradients.
\newblock \emph{arXiv preprint arXiv:1711.09404}, 2017.

\bibitem[Salman et~al.(2019)Salman, Li, Razenshteyn, Zhang, Zhang, Bubeck, and
  Yang]{salman2019provably}
Hadi Salman, Jerry Li, Ilya Razenshteyn, Pengchuan Zhang, Huan Zhang, Sebastien
  Bubeck, and Greg Yang.
\newblock Provably robust deep learning via adversarially trained smoothed
  classifiers.
\newblock In \emph{Advances in Neural Information Processing Systems}, pages
  11289--11300, 2019.

\bibitem[Saxena and Verbeek(2016)]{saxena2016convolutional}
Shreyas Saxena and Jakob Verbeek.
\newblock Convolutional neural fabrics.
\newblock In \emph{Advances in Neural Information Processing Systems}, pages
  4053--4061, 2016.

\bibitem[Schmidt et~al.(2018)Schmidt, Santurkar, Tsipras, Talwar, and
  M{\k{a}}dry]{schmidt2018adversarially}
Ludwig Schmidt, Shibani Santurkar, Dimitris Tsipras, Kunal Talwar, and
  Aleksander M{\k{a}}dry.
\newblock Adversarially robust generalization requires more data.
\newblock In \emph{Advances in Neural Information Processing Systems 31}, pages
  5019--5031, 2018.

\bibitem[Sehwag et~al.(2019)Sehwag, Bhagoji, Song, Sitawarin, Cullina, Chiang,
  and Mittal]{sehwag2019analyzing}
Vikash Sehwag, Arjun~Nitin Bhagoji, Liwei Song, Chawin Sitawarin, Daniel
  Cullina, Mung Chiang, and Prateek Mittal.
\newblock Analyzing the robustness of open-world machine learning.
\newblock In \emph{Proceedings of the 12th ACM Workshop on Artificial
  Intelligence and Security}, pages 105--116, 2019.

\bibitem[Shafahi et~al.(2019)Shafahi, Najibi, Ghiasi, Xu, Dickerson, Studer,
  Davis, Taylor, and Goldstein]{shafahi2019adversarial}
Ali Shafahi, Mahyar Najibi, Amin Ghiasi, Zheng Xu, John Dickerson, Christoph
  Studer, Larry~S Davis, Gavin Taylor, and Tom Goldstein.
\newblock Adversarial training for free!
\newblock \emph{arXiv preprint arXiv:1904.12843}, 2019.

\bibitem[Shaham et~al.(2018)Shaham, Yamada, and
  Negahban]{shaham2018understanding}
Uri Shaham, Yutaro Yamada, and Sahand Negahban.
\newblock Understanding adversarial training: Increasing local stability of
  supervised models through robust optimization.
\newblock \emph{Neurocomputing}, 307:\penalty0 195--204, 2018.

\bibitem[Singh et~al.(2018)Singh, Gehr, Mirman, P{\"u}schel, and
  Vechev]{singh2018fast}
Gagandeep Singh, Timon Gehr, Matthew Mirman, Markus P{\"u}schel, and Martin
  Vechev.
\newblock Fast and effective robustness certification.
\newblock In \emph{Advances in Neural Information Processing Systems}, pages
  10802--10813, 2018.

\bibitem[Sinha et~al.(2018)Sinha, Namkoong, and Duchi]{duchi18}
Aman Sinha, Hongseok Namkoong, and John Duchi.
\newblock {Certifiable Distributional Robustness with Principled Adversarial
  Training}.
\newblock In \emph{ICLR}, 2018.
\newblock URL \url{https://openreview.net/forum?id=Hk6kPgZA-}.

\bibitem[Srivastava et~al.(2014)Srivastava, Hinton, Krizhevsky, Sutskever, and
  Salakhutdinov]{srivastava2014dropout}
Nitish Srivastava, Geoffrey Hinton, Alex Krizhevsky, Ilya Sutskever, and Ruslan
  Salakhutdinov.
\newblock Dropout: a simple way to prevent neural networks from overfitting.
\newblock \emph{The journal of machine learning research}, 15\penalty0
  (1):\penalty0 1929--1958, 2014.

\bibitem[Su et~al.(2018)Su, Zhang, Chen, Yi, Chen, and Gao]{su2018robustness}
Dong Su, Huan Zhang, Hongge Chen, Jinfeng Yi, Pin-Yu Chen, and Yupeng Gao.
\newblock Is robustness the cost of accuracy?--a comprehensive study on the
  robustness of 18 deep image classification models.
\newblock In \emph{Proceedings of the European Conference on Computer Vision
  (ECCV)}, pages 631--648, 2018.

\bibitem[Szegedy et~al.(2013)Szegedy, Zaremba, Sutskever, Bruna, Erhan,
  Goodfellow, and Fergus]{szegedy2013intriguing}
Christian Szegedy, Wojciech Zaremba, Ilya Sutskever, Joan Bruna, Dumitru Erhan,
  Ian Goodfellow, and Rob Fergus.
\newblock Intriguing properties of neural networks.
\newblock \emph{arXiv preprint arXiv:1312.6199}, 2013.

\bibitem[Tsipras et~al.(2019)Tsipras, Santurkar, Engstrom, Turner, and
  Madry]{tsipras2018robustness}
Dimitris Tsipras, Shibani Santurkar, Logan Engstrom, Alexander Turner, and
  Aleksander Madry.
\newblock Robustness may be at odds with accuracy.
\newblock In \emph{International Conference on Learning Representations}, 2019.

\bibitem[Wang et~al.(2020)Wang, Zou, Yi, Bailey, Ma, and Gu]{wang2020improving}
Yisen Wang, Difan Zou, Jinfeng Yi, James Bailey, Xingjun Ma, and Quanquan Gu.
\newblock Improving adversarial robustness requires revisiting misclassified
  examples.
\newblock In \emph{International Conference on Learning Representations}, 2020.

\bibitem[Wang et~al.(2018)Wang, Jha, and Chaudhuri]{wang2018analyzing}
Yizhen Wang, Somesh Jha, and Kamalika Chaudhuri.
\newblock Analyzing the robustness of nearest neighbors to adversarial
  examples.
\newblock In \emph{International Conference on Machine Learning}, pages
  5133--5142, 2018.

\bibitem[Weng et~al.(2018)Weng, Zhang, Chen, Yi, Su, Gao, Hsieh, and
  Daniel]{weng2018evaluating}
Tsui-Wei Weng, Huan Zhang, Pin-Yu Chen, Jinfeng Yi, Dong Su, Yupeng Gao,
  Cho-Jui Hsieh, and Luca Daniel.
\newblock Evaluating the robustness of neural networks: An extreme value theory
  approach.
\newblock \emph{arXiv preprint arXiv:1801.10578}, 2018.

\bibitem[Xu et~al.(2009)Xu, Caramanis, and Mannor]{xu2009robustness}
Huan Xu, Constantine Caramanis, and Shie Mannor.
\newblock Robustness and regularization of support vector machines.
\newblock \emph{Journal of machine learning research}, 10\penalty0
  (Jul):\penalty0 1485--1510, 2009.

\bibitem[Yang et~al.(2020)Yang, Rashtchian, Wang, and
  Chaudhuri]{yang2020adversarial}
Yao-Yuan Yang, Cyrus Rashtchian, Yizhen Wang, and Kamalika Chaudhuri.
\newblock {Robustness for Non-Parametric Classification: A Generic Attack and
  Defense}.
\newblock In \emph{AISTATS}, 2020.

\bibitem[Zagoruyko and Komodakis(2016)]{zagoruyko2016wide}
Sergey Zagoruyko and Nikos Komodakis.
\newblock Wide residual networks.
\newblock In \emph{British Machine Vision Conference}, 2016.

\bibitem[Zhai et~al.(2019)Zhai, Cai, He, Dan, He, Hopcroft, and
  Wang]{zhai2019adversarially}
Runtian Zhai, Tianle Cai, Di~He, Chen Dan, Kun He, John Hopcroft, and Liwei
  Wang.
\newblock Adversarially robust generalization just requires more unlabeled
  data.
\newblock \emph{arXiv preprint arXiv:1906.00555}, 2019.

\bibitem[Zhang et~al.(2019{\natexlab{a}})Zhang, Yu, Jiao, Xing, Ghaoui, and
  Jordan]{zhang2019theoretically}
Hongyang Zhang, Yaodong Yu, Jiantao Jiao, Eric~P Xing, Laurent~El Ghaoui, and
  Michael~I Jordan.
\newblock Theoretically principled trade-off between robustness and accuracy.
\newblock In \emph{International Conference on Machine Learning},
  2019{\natexlab{a}}.

\bibitem[Zhang et~al.(2019{\natexlab{b}})Zhang, Han, Niu, Liu, and
  Sugiyama]{zhang2019bottleneck}
Jingfeng Zhang, Bo~Han, Gang Niu, Tongliang Liu, and Masashi Sugiyama.
\newblock Where is the bottleneck of adversarial learning with unlabeled data?
\newblock \emph{arXiv preprint arXiv:1911.08696}, 2019{\natexlab{b}}.

\bibitem[Zhang et~al.(2020{\natexlab{a}})Zhang, Xu, Han, Niu, Cui, Sugiyama,
  and Kankanhalli]{zhang2020attacks}
Jingfeng Zhang, Xilie Xu, Bo~Han, Gang Niu, Lizhen Cui, Masashi Sugiyama, and
  Mohan Kankanhalli.
\newblock Attacks which do not kill training make adversarial learning
  stronger.
\newblock \emph{arXiv preprint arXiv:2002.11242}, 2020{\natexlab{a}}.

\bibitem[Zhang et~al.(2020{\natexlab{b}})Zhang, Chen, Gu, and
  Evans]{zhang2020understanding}
Xiao Zhang, Jinghui Chen, Quanquan Gu, and David Evans.
\newblock Understanding the intrinsic robustness of image distributions using
  conditional generative models.
\newblock \emph{arXiv preprint arXiv:2003.00378}, 2020{\natexlab{b}}.

\bibitem[Zoph and Le(2016)]{zoph2016neural}
Barret Zoph and Quoc~V Le.
\newblock Neural architecture search with reinforcement learning.
\newblock \emph{arXiv preprint arXiv:1611.01578}, 2016.

\end{thebibliography}

\newpage
\onecolumn
\appendix

\section{Experimental Setup: More Details}
\label{app:setup}

Experiments run with NVIDIA GeForce RTX 2080 Ti GPUs. 
We report the number with a single run of experiment.
The code for the experiments is available at \url{https://github.com/yangarbiter/robust-local-lipschitz}.

{\bf Synthetic Staircase setup.}
As a toy example, we first consider a synthetic regression dataset, which is known to show that adversarial training can seriously overfit when the sample size is too small~\cite{raghunathan2019adversarial}. 
We use the code provided by the authors to reproduce the result for natural training and AT, and we add results for GR, LLR, and TRADES. The model for this dataset is linear regression in a kernel space using cubic $B$-splines as the basis. Let $\mathcal{F}$ be the hypothesis set and the regularization term $\|f\|^2$ is the RKHS norm of the weight vector in the kernel space.
The regularization term is set to $\lambda=0.1$ and the result is
evaluated using the mean squared error~(MSE).
For GR, we set $\beta=10^{-4}$ and for LLR, we only use the local linearity $\gamma$
for regularization and the regularization strength is $10^{-2}$.
The perturbation set $P(x, \epsilon) = \{x-\epsilon, x, x-\epsilon\}$ considers
only the point-wise perturbation.

{\textbf{MNIST setup.}}
We use two different convolutional neural networks~(CNN) with different
capacity.
The first CNN (CNN1) has two convolutional layers followed by two
fully connected layer\footnote{CNN1 is retrieved from pytorch repository \url{https://github.com/pytorch/examples/blob/master/mnist/main.py}} and
the second larger CNN (CNN2) has four convolutional layers followed
by three fully connected layers\footnote{CNN2 is retrieved from TRADES~\cite{zhang2019theoretically} github repository \url{https://github.com/yaodongyu/TRADES/blob/master/models/small_cnn.py}}.
We set the perturbation radius to $0.1$.
The network is optimized with SGD with momentum $0.9$.

{\textbf{SVHN setup.}}
We use the wide residual network WRN-40-10~\cite{zagoruyko2016wide} and set
the perturbation radius to $0.031$.
The initial learning rate is set to $0.01$ except LLR, AT and RST.
We set the initial learning rate $0.001$ for them.
The network is optimized with SGD without momentum (the default setting in \texttt{pytorch}).

{\textbf{CIFAR10 setup.}}
Following \cite{madry2018towards,zhang2019theoretically}, we use the wide
residual network WRN-40-10~\cite{zagoruyko2016wide} and set the perturbation
radius to $0.031$.
The initial learning rate is set to $0.01$ except RST.
We set the initial learning rate $0.001$ for them.
Data augmentation is performed.
When performing data augmentation, we randomly crop the image to
$32 \times 32$ with 4 pixels of padding then perform random horizontal flips.
The network is optimized with SGD without momentum (the default setting in \texttt{pytorch}).

{\textbf{Restricted ImageNet setup.}}
Following \cite{tsipras2018robustness}, we set the perturbation
radius $\epsilon =0.005$,
use the residual network (ResNet50)~\cite{he2016deep}
and use Adam \cite{kingma2014adam} to optimize.
Data augmentation is performed: 
During training, we resize an image to $72\times72$ and randomly crop to
$64\times64$ with $8$ pixels padding.
When evaluating, we resize the image to $72\times72$ and crop
in the center resulting in a $64\times64$ image.

\begin{table}[h!]
  \centering
  \begin{tabular}{ccccc} 
    dataset & MNIST & SVHN & CIFAR10 & Restricted ImageNet \\
    \hline
    network structure   & CNN1 / CNN2 & WRN-40-10
                         & WRN-40-10   & ResNet50  \\ 
    optimizer             & SGD         & SGD    & SGD    & Adam \\ 
    batch size            & 64          & 64     & 64     & 128 \\ 
    perturbation radius   & 0.1         & 0.031  & 0.031  & 0.005 \\ 
    perturbation step size& 0.02        & 0.0062 & 0.0062 & 0.001 \\
    \# train examples     & 60000       & 73257  & 50000  & 257748 \\
    \# test examples      & 10000       & 26032  & 10000  & 10150 \\
    \# classes            & 10          & 10     & 10     & 9\\
    \hline
  \end{tabular}
	\caption{Experimental setup and parameters for the four real datasets that we test on in this paper.
					 No weight decay is applied to the model.}
\end{table}

\paragraph{Details on the network structure.}
\begin{itemize}
  \item CNN1 is retrieved from pytorch repository.\footnote{\url{https://github.com/pytorch/examples/blob/master/mnist/main.py}}
  \item CNN2 is retrieved from TRADES~\cite{zhang2019theoretically} github repository.\footnote{\url{https://github.com/yaodongyu/TRADES/blob/master/models/small_cnn.py}}
  \item WRN-40-10 represents the wide residual network~\cite{zagoruyko2016wide}
        with depth equals to forty and widen factor equals to ten.
  \item ResNet50 represents the residual network with 50 layers \cite{he2016deep}.
\end{itemize}

\paragraph{Learning rate schedulers for each dataset}
\begin{itemize}
  \item MNIST: We run $160$ epochs on the training dataset, where we decay the learning rate by a factor $0.1$ in the $40$th, $80$th $120$th and $140$th epochs.
  \item SVHN: We run $60$ epochs on the training dataset, where we decay the learning rate by a factor $0.1$ in the $30$th and $50$th epochs.
  \item CIFAR10: We run $120$ epochs on the training dataset, where we decay the learning rate
      by a factor $0.1$ in the $40$th, $80$th and $100$th epochs.
  \item Restricted ImageNet: We run $70$ epochs on the training dataset, where we decay the learning rate
      by a factor $0.1$ in the $40$th and $60$th epochs.
\end{itemize}

\subsubsection{Spiral Dataset}

Here we provide the details for generating the spiral dataset in Figure~\ref{figure:robustness-confidence}.

We take $x$ as a uniform sample $[0, 4.33\pi]$, the noise level is set to $0.75$ (uniform $[0, 0.75]$).

We construct the negative examples using the transform:
$$
(-x \cos{x} + uniform(noise), x \sin{x} + uniform(noise) )
$$

We construct the positive examples using the transform:
$$
(-x \cos{x} + uniform(noise), -x \sin{x} + uniform(noise) )
$$

\subsubsection{Details on the baseline algorithms}

\noindent{\textbf{Gradient Regularization~(GR).}}
The Gradient Regularization~(GR) is in the form of soft regularization.
We use the latest work by \citet{finlay2019scaleable}
for our experiments. In general, GR models can be formulated as adding a regularization term on
the norm of gradient of the loss function:
\begin{equation*}
\label{equ: gradient-regularization}
\min_{f} \bbE \Big\{\cL(f(\X),Y)+\beta\|\nabla_\X \cL(f(\X), Y)\|_2^2\Big\}.
\end{equation*}
\citet{finlay2019scaleable} compute the gradient term
through a finite difference approximation.
Let $d = \frac{\nabla f(\X)}{\|\nabla f(\X)\|_2}$ and $h$ be the step size. Then,
\begin{equation*}
\|\nabla f(\X)\|_2^2 \approx \left(\frac{\cL(f(\X+hd),Y) - \cL(f(\X),Y)}{h}\right)
\end{equation*}
We use the publicly available implementation.\footnote{\url{https://github.com/cfinlay/tulip}}

\noindent{\textbf{Locally-Linear Regularization model~(LLR).}}
\citet{qin2019adversarial} propose to regularize the local linearity
through the motivation that AT with PGD increases the model's local linearity.
The authors first formulate the function $g$ to evaluate the local linearity of a model.
\begin{align*}
&g(f, \delta, \X) = |\cL(f(\X + \delta), Y) - \cL(f(\X), Y) - \delta^T \nabla_\X\cL(f(\X), Y)|
\end{align*}
Define
$\gamma(\epsilon, \X) = \bbE	\Big\{\max_{\delta \in B(\X, \epsilon)} g(f, \delta, \X)\Big\}$
and also
$\delta_{LLR} = \bbE	\Big\{\argmax_{\delta \in B(\X, \epsilon)} g(f, \delta, \X)\}.$
The loss function for Locally-Linear Regularization (LLR) model is
\begin{equation*}
\bbE \Big\{\cL(f(\X),Y)
+ \lambda \gamma(\epsilon, \X)
+ \mu \|\delta^T_{LLR} \nabla_\X \cL(f(\X), Y)\|\Big\}
\end{equation*}

We use our own implementation of LLR.

\noindent{\textbf{Adversarial training~(AT).}} Adversarial training is a successful defense by~\citet{madry2018towards} that trains based on adversarial examples:
\begin{equation}
\label{equ: adv training by Madry}
\min_{f} \bbE \Big\{\max_{\X'\in\bbB(\X,\epsilon)}\cL(f(\X'),Y)\Big\}.
\end{equation}

\noindent{\textbf{Robust self-training~(RST).}}
Robust self-training is a defense proposed by~\citet{raghunathan2020understanding} to
improve the tradeoff between standard and robust error that occurs with AT.
The RST in the original paper includes the use unlabeled data.
However, to be fair in the experiments, we considers only the supervised learning part.
RST uses the following optimization problem for the loss function:

\begin{equation*}
\min_{f} \bbE \Big\{\cL(f(\X),Y) + \beta \max_{\X'\in\bbB(\X,\epsilon)}\cL(f(\X'),Y)\Big\}.
\end{equation*}

\noindent{\textbf{Locally-Lipschitz models (TRADES).}} One of the best methods for robustness via smoothness is TRADES~\cite{zhang2019theoretically}, which has been shown to obtain state-of-the-art adversarially accuracy in many cases. TRADES uses the following optimization problem for the loss function:
\begin{equation*}
\min_{f} \bbE \Big\{\cL(f(\X),Y)+\beta\max_{\X'\in\bbB(\X,\epsilon)} \cL(f(\X),f(\X'))\Big\},
\end{equation*}
where the second term encourages local Lipschitzness.
We use the publicly available implementation.\footnote{\url{https://github.com/yaodongyu/TRADES}}

\section{Proof-of-Concept Classifier}\label{app:proofofconcept}\label{app:proof-of-concept}

Our theoretical result in Theorem~\ref{thm:existence-multiclass} leaves two concerns about practical solutions: (i) we need to verify that existing networks can achieve high robustness and accuracy, and (ii) we need training methods to converge to such solutions. For the first concern, we present proof-of-concept networks that use standard architectures, but they have an unfair advantage: they can use the test data. While this may seem unreasonable, we argue that the results in Table~\ref{tab:separation} provide multiple insights. This
process is sufficient to establish the {\em{existence}} of a robust and  accurate classifier. With access to the test data, current training methods can plausibly train a nearly-optimal robust classifier (we use robust self-training with $\lambda$=2). Finally, the robust accuracies can actually get close to 100\% on MNIST, SVHN, and CIFAR-10. These insights reinforce our claim that robustness and accuracy
are both achievable by neural networks on image classification tasks.

\begin{table}[h!]
	\centering
	\begin{tabular}{lccc}
		\toprule
		& perturbation $\epsilon$ & accuracy & adversarial accuracy\\
		\midrule
		MNIST                 & 0.1                    & 99.99 & 99.98  \\
		SVHN                  & 0.031                  & 100.00 & 99.90  \\
		CIFAR-10              & 0.031                  & 100.00 & 99.99  \\
		\bottomrule
	\end{tabular}
	\centering
	\captionof{table}{Proof-of-concept: demonstrating a robust network trained with access to the test set.}
	\label{tab:proofofconcept}
\end{table}

\begin{table}[h!]
	\centering
	\begin{tabular}{lccc} 
		\toprule
		& MNIST & SVHN & CIFAR10 \\ %
		\midrule
		network structure   & CNN2 & WRN-40-10 & WRN-40-10    \\ %
		optimizer             & SGD         & Adam    & Adam  \\ %
		batch size            & 128         & 64     & 64     \\ %
		epochs                & 40          & 60     & 60     \\ %
		perturbation radius   & 0.1         & 0.031  & 0.031  \\ %
		perturbation step size& 0.02        & 0.0062 & 0.0062 \\ %
		initial learning rate & 0.0001      & 0.001  & 0.01   \\ %
		\bottomrule
	\end{tabular}
	\captionof{table}{Setup for the Proof-of-Concept classifiers.}
\end{table}
\section{Separation Experiment Results} \label{app:sep}

Figures \ref{fig:separation} shows the distribution of the
train-train and test train separation for each dataset.

\begin{figure}[h!]
	\centering
	\subfloat[MNIST train]{
		\includegraphics[width=0.4\textwidth]{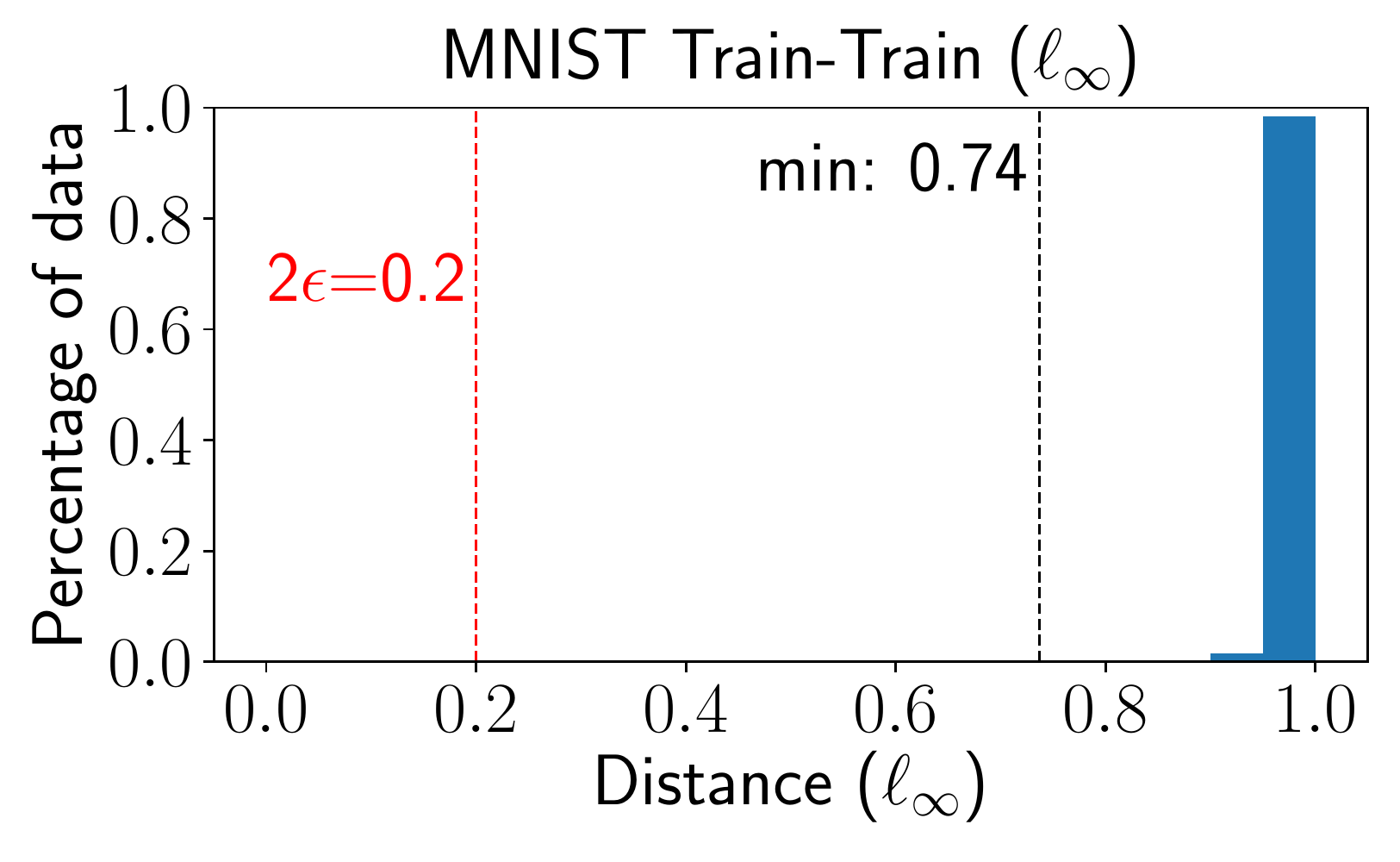}}
	\subfloat[SVHN train]{
		\includegraphics[width=0.4\textwidth]{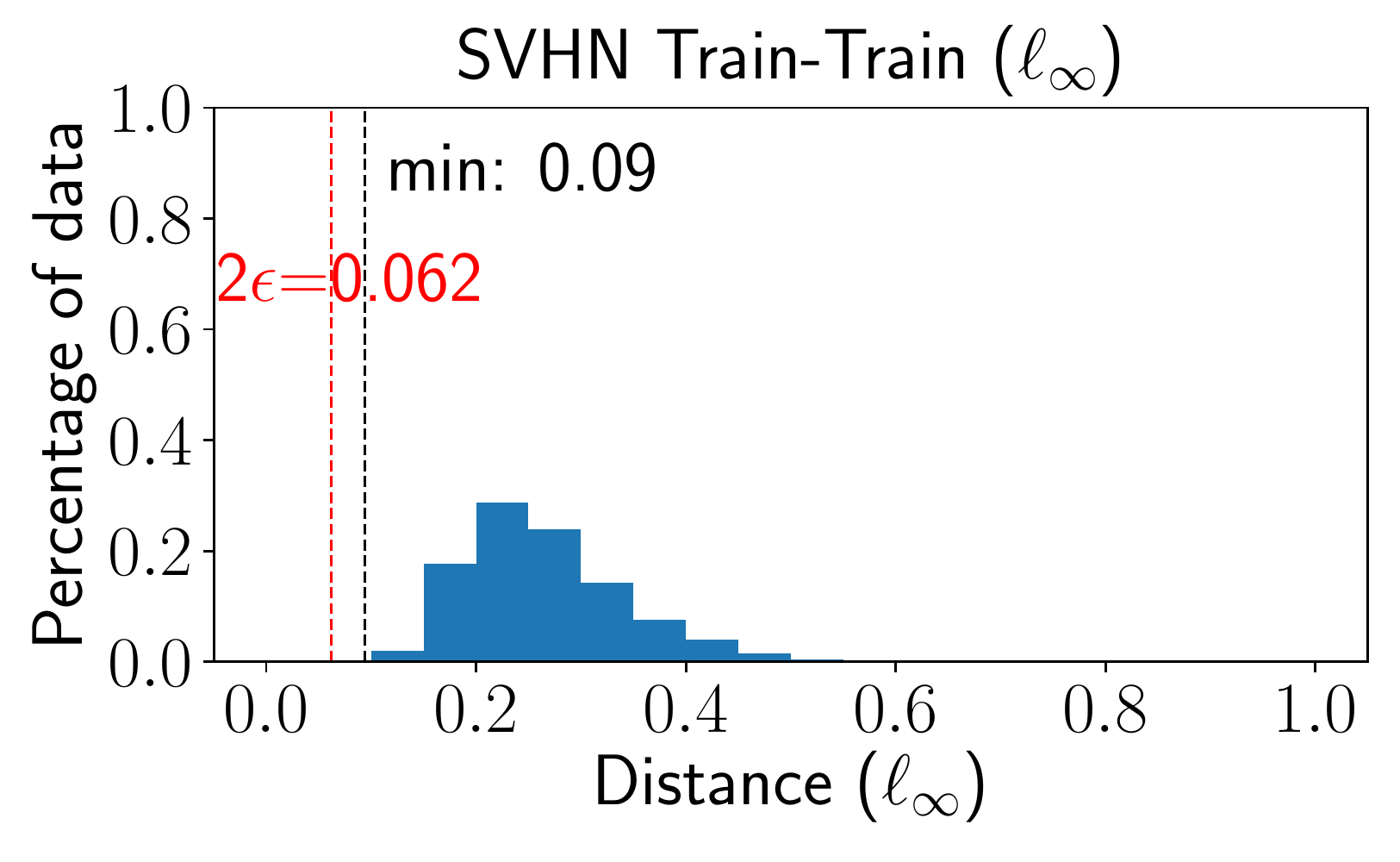}}

	\subfloat[CIFAR-10 train]{
		\includegraphics[width=0.4\textwidth]{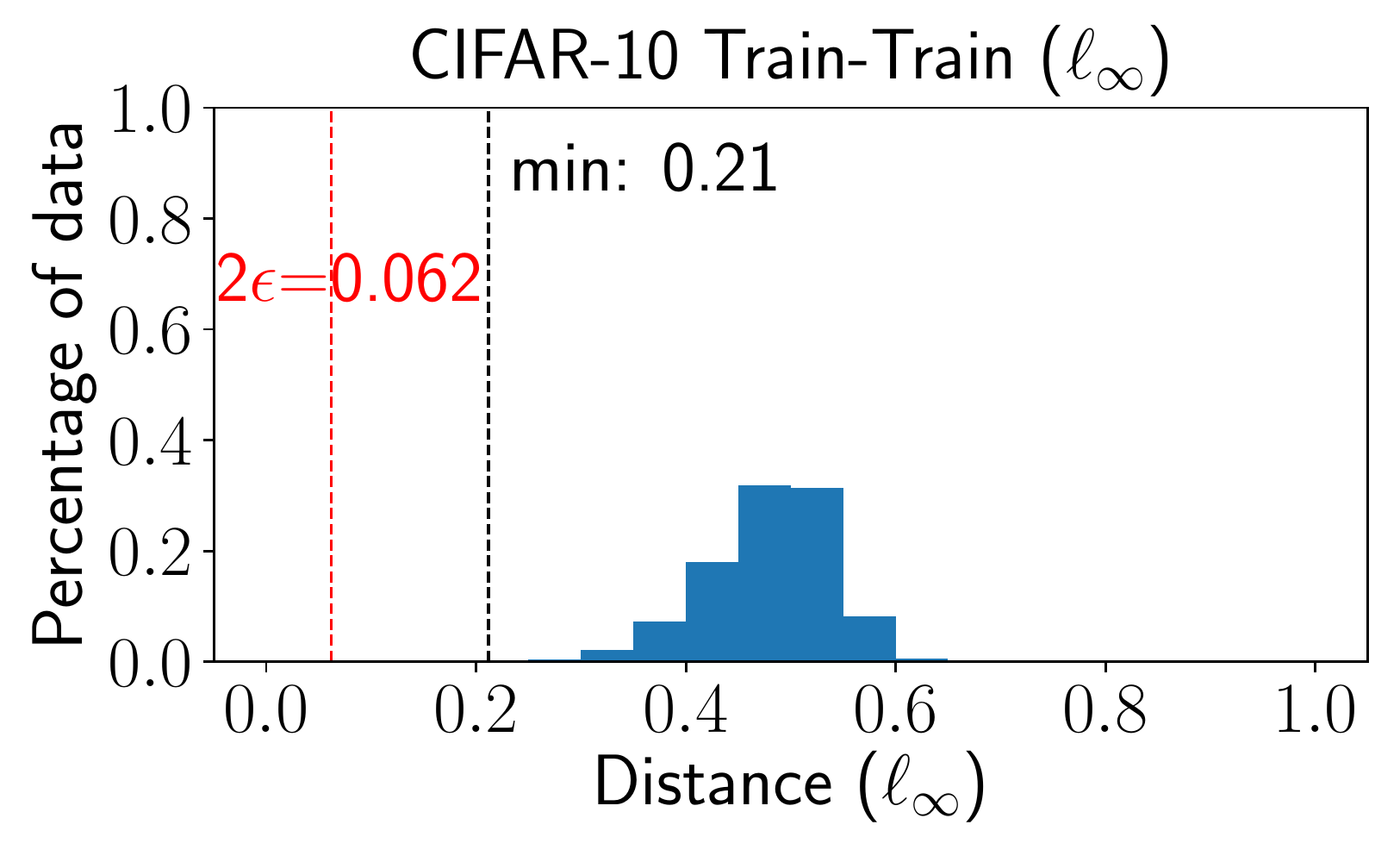}}
	\subfloat[Restricted ImageNet train]{
		\includegraphics[width=0.4\textwidth]{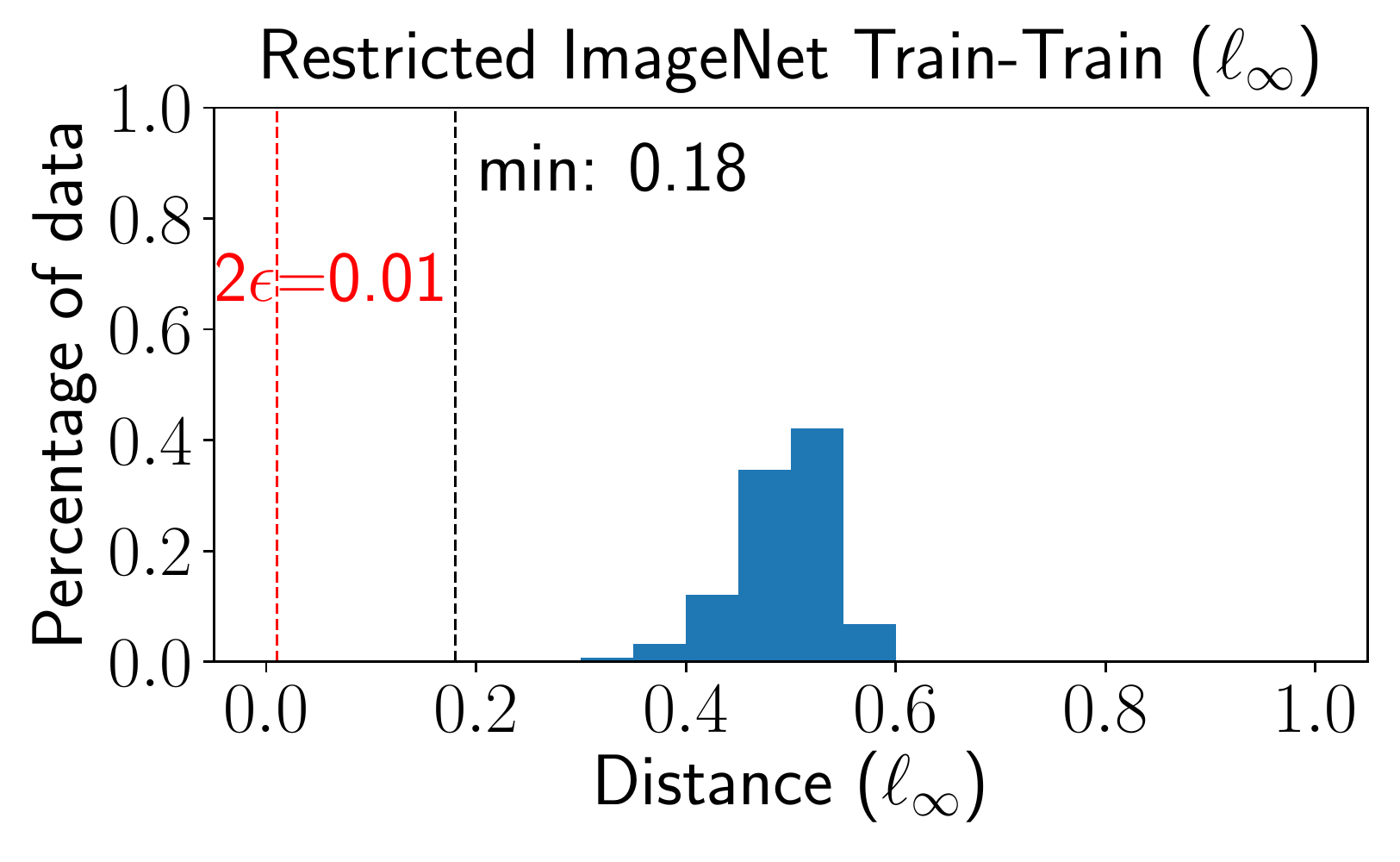}}

	\caption{Train-Train separation histograms: MNIST, SVHN, CIFAR-10 and Restricted ImageNet.}
	\label{fig:separation}
\end{figure}

\begin{figure}[h!]
	\centering
	\subfloat[]{
		\includegraphics[width=0.30\textwidth]{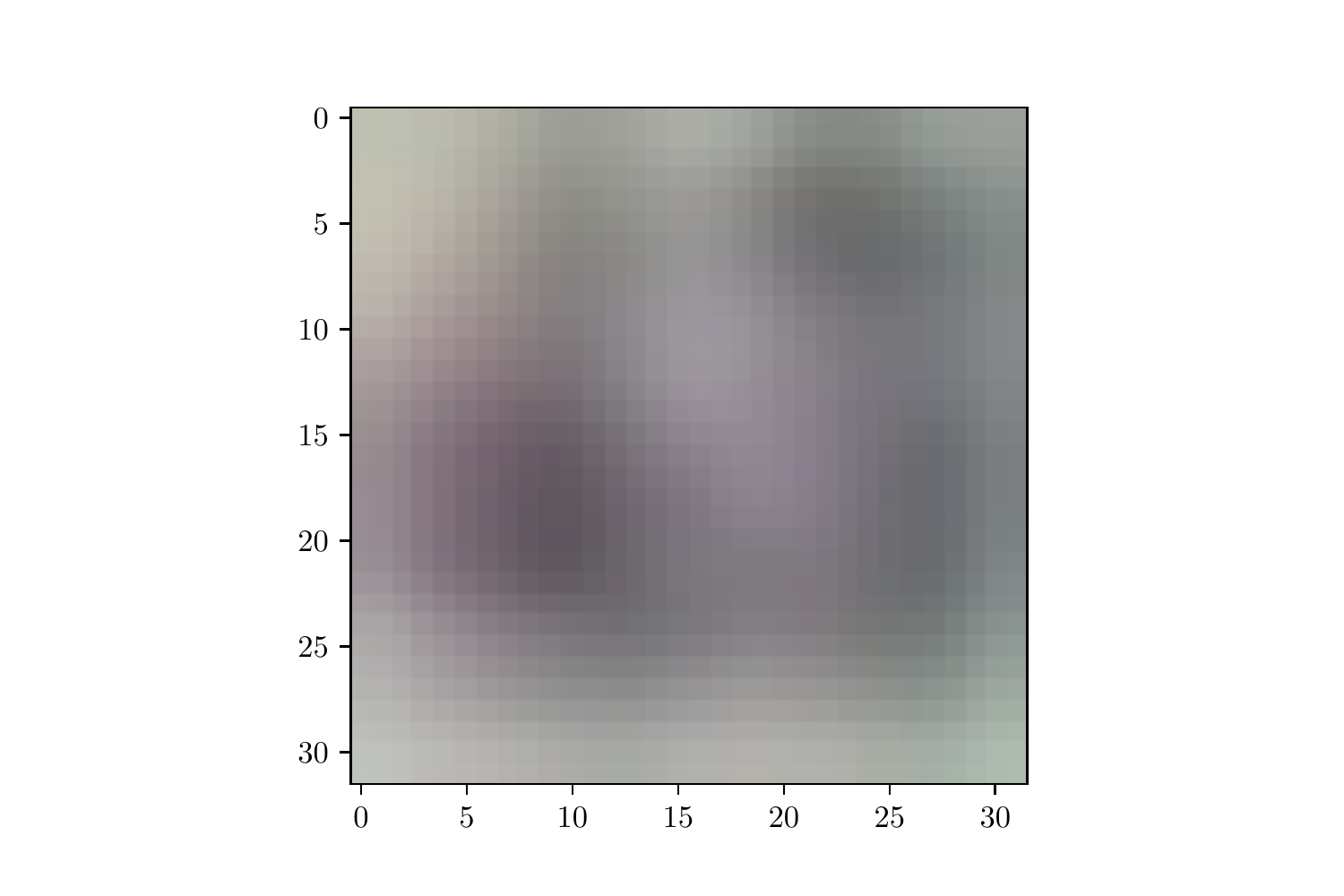}}
	\subfloat[]{
		\includegraphics[width=0.30\textwidth]{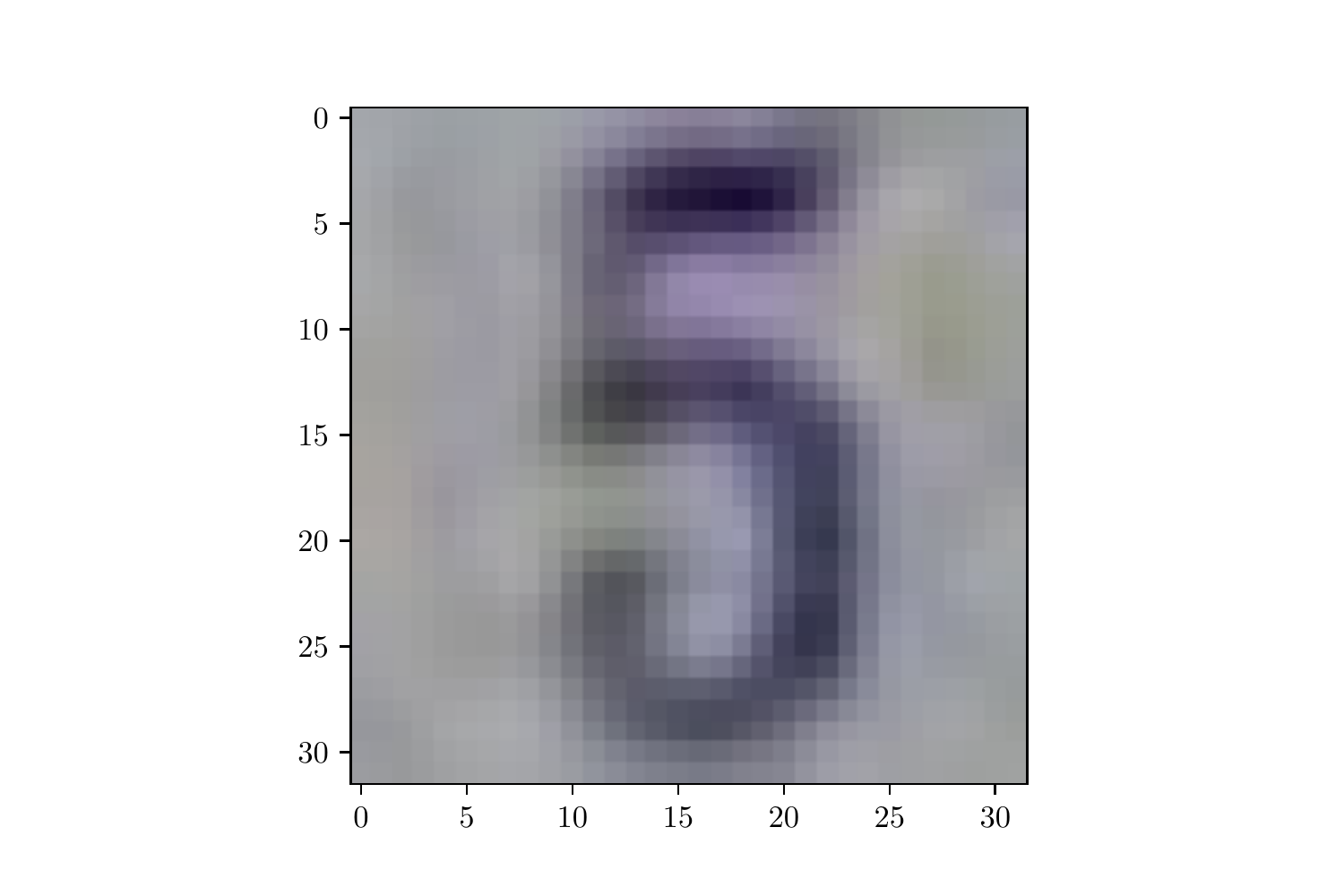}}
	\subfloat[]{
		\includegraphics[width=0.30\textwidth]{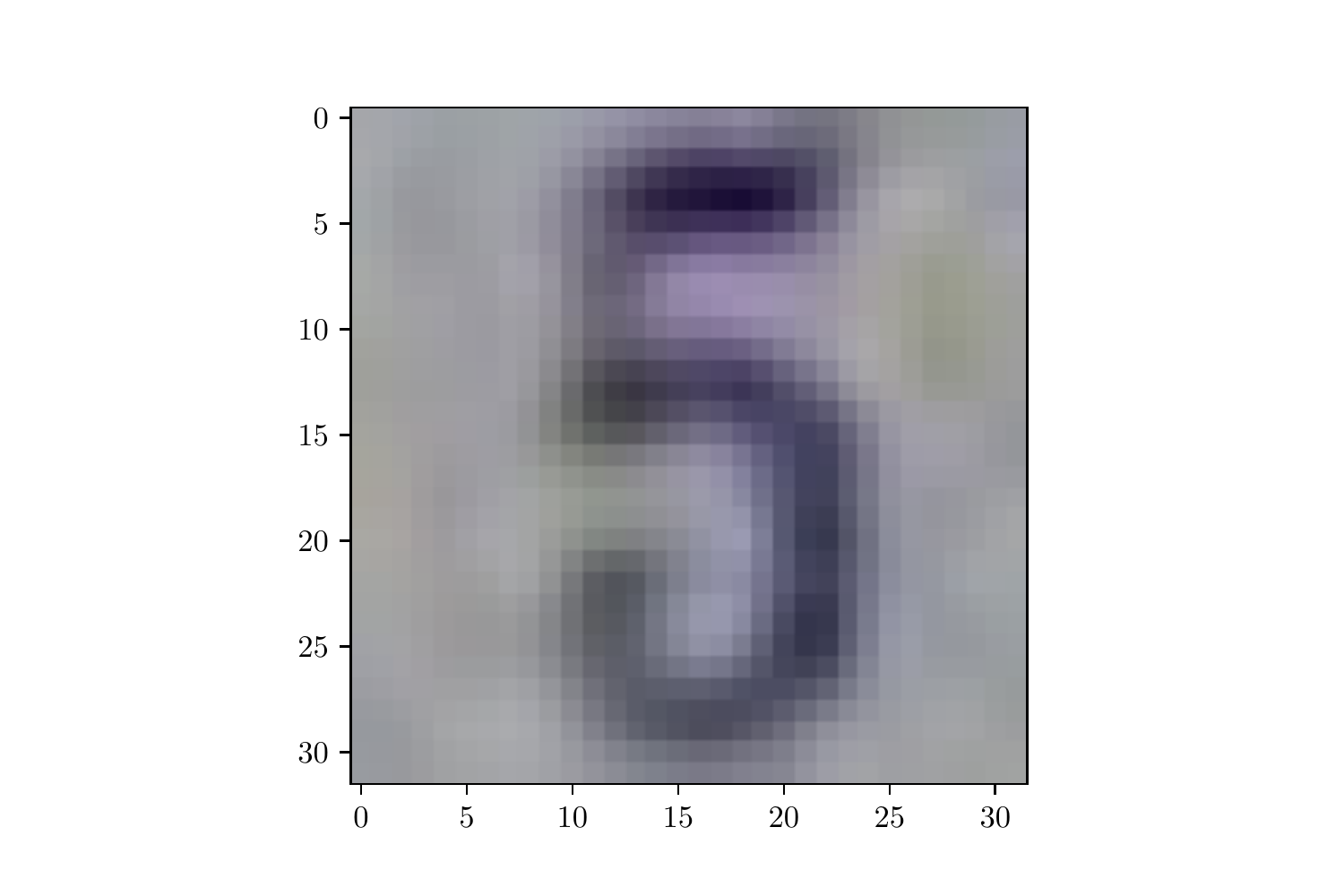}}

	\caption{Images ignored in SVHN. The left most image appeared twice in the training set
	and one labeled as a one and the other one labeled as a five.
	The other two images are the closest images to each other in $\ell_\infty$ distance.
	The middle one is labeled as a five and the right most one is labeled as a one (which is clearly miss labeled).}
	\label{fig:svhn_removed}
\end{figure}

\begin{figure}[h!]
	\centering
	\subfloat[]{
		\includegraphics[width=0.30\textwidth]{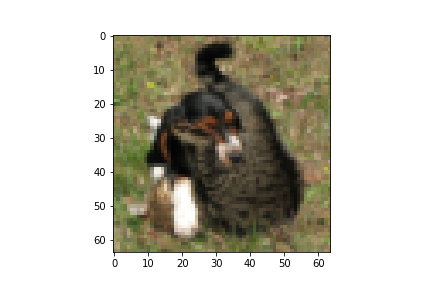}}
	\subfloat[]{
		\includegraphics[width=0.30\textwidth]{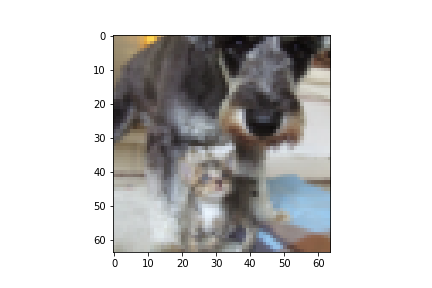}}
	\subfloat[]{
		\includegraphics[width=0.30\textwidth]{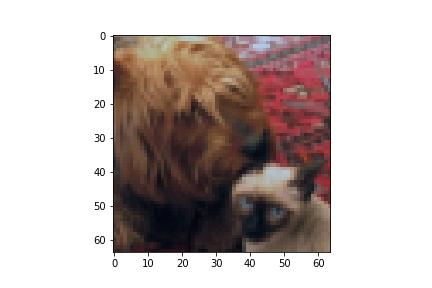}}

	\caption{Images ignored in Restricted ImageNet. These three images appeared twice in the
		training set and labeled differently (one labeled as a cat and the other one labeled as a dog).}
	\label{fig:resimgnet_removed}
\end{figure}

\paragraph{Random label.} In addition, we conducted a random label experiment
to show that in regular datasets, the distance between same-class examples are
smaller than differently-labeled examples.
Table \ref{tab:random_label_sep} shows the minimum and average separation for
randomly labeled and natural dataset.
Figure \ref{fig:random_sep} plots the minimum separation for these two dataset
and clearly shows that natural dataset is more well-separated than
random-labeled dataset.

\begin{table}
	\begin{tabular}{lccccccccc}
		\toprule
		{} &   $\epsilon$ & \multicolumn{4}{c}{randomly labeled} & \multicolumn{4}{c}{original labels} \\
		\midrule
		{} &   {}  & \multicolumn{2}{c}{train-train} & \multicolumn{2}{c}{test-train}
		           & \multicolumn{2}{c}{train-train} & \multicolumn{2}{c}{test-train} \\
		{} &   {}  &    min &  mean &   min &  mean &   min &  mean &   min &  mean \\
		\midrule
		MNIST       & 0.100 &  0.231 & 0.902 & 0.290 & 0.904 & 0.737 & 0.990 & 0.812 & 0.990 \\
		CIFAR-10    & 0.031 &  0.125 & 0.476 & 0.098 & 0.475 & 0.212 & 0.479 & 0.220 & 0.479 \\
		SVHN        & 0.031 &  0.012 & 0.259 & 0.102 & 0.271 & 0.094 & 0.264 & 0.110 & 0.274 \\
		ResImageNet & 0.005 &  0.000 & 0.485 & 0.000 & 0.483 & 0.180 & 0.492 & 0.224 & 0.492 \\
		\bottomrule
	\end{tabular}
	\caption{Separation results on real datasets for both original labels and randomly assigned labels.}
	\label{tab:random_label_sep}
\end{table}

\begin{figure}[]
	\centering
	\includegraphics[width=.7\textwidth]{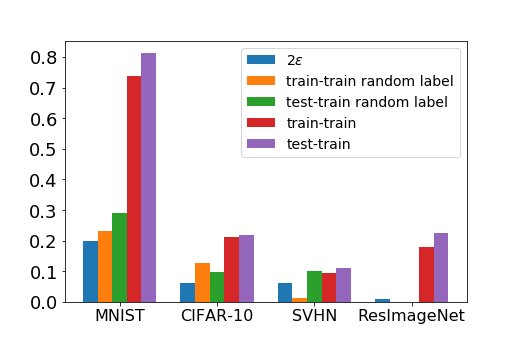}
	\caption{Separation results for four image datasets. We measure the separation for the original labels, and we also perform the experiment where we randomly label the test and train examples. We see that in MNIST, CIFAR-10, and ResImageNet, the separation diminishes quite a bit when using random labels. Indeed for ResImageNet, there are a number of duplicated examples that appear multiple times in the dataset. Overall, we conclude that the separation is much larger between {\em different} classes, while this is not the case {\em within} the same class.}
	\label{fig:random_sep}
\end{figure}
\section{Further Experimental Results}\label{app:more_exp}

\subsection{Multi-targeted Attack Results}
\label{sec:mt_results}

Certain prior works have suggested that the multi-targeted (MT) attack~\cite{gowal2019alternative} is stronger than PGD.
For example, the MT attack is highlighted as a selling point for LLR~\cite{qin2019adversarial}.
For completeness,  we complement our empirical results from earlier by running all of the
experiments using the MT attack.
We run MT attack with 20 iterations for each target.
Tables~\ref{tab:mnist1_mt} to \ref{tab:mt_dropout} provide the results.

We verify that our discussion about accuracy, robustness, and Lipschitzness remains valid using this attack.
Comparing with the results using the PGD attack (Tables~\ref{tab:mnist} to
\ref{tab:svhn_dropout} above), the results with the MT attack gives a
slightly lower adversarial test accuracy for all methods.
The drop in accuracy is usually around 1--5\%. This is within our expectation as this attack is regarded as a stronger attack than PGD.

The MT results still justify the previous discussion from Section \ref{sec:validation} in general.  Training methods leading to models with higher adversarial test accuracy are more locally smooth
(smaller local Lipschitz constant during testing). 
Overall, we believe that seeing consistent results between PGD and MT only strengthens our argument that  robustness requires some local Lipschitzness, and moreover,  that the accuracy-robustness trade-off may not be necessary for separated data.

\begin{table}[ht!]
	\scriptsize
	\setlength{\tabcolsep}{4pt}
	\centering
\begin{tabular}{lc|cc|c|cc}
\toprule
{} &  \thead{train \\ accuracy} &  \thead{test \\ accuracy} &  \thead{adv test \\ accuracy} &  \thead{test \\ lipschitz} &  \thead{gap} &  \thead{adv \\ gap} \\
\midrule
Natural           &  100.00 &   99.20 &       47.30 &      67.25 & 0.80 &    -0.53 \\
GR                &   99.99 &   99.29 &       89.99 &      26.05 & 0.70 &     3.30 \\
LLR               &  100.00 &   99.43 &       90.49 &      30.44 & 0.57 &     4.06 \\
AT                &   99.98 &   99.31 &       97.23 &       8.84 & 0.67 &     2.65 \\
RST($\lambda$=.5)   &  100.00 &   99.34 &                         96.46 &                      11.09 &         0.66 &                3.22 \\
RST($\lambda$=1)    &  100.00 &   99.31 &                         96.93 &                      11.22 &         0.69 &                2.97 \\
RST($\lambda$=2)    &  100.00 &   99.31 &                         97.00 &                      12.39 &         0.69 &                2.95 \\
TRADES($\beta$=1) &   99.81 &   99.26 &       96.53 &       9.69 & 0.55 &     2.12 \\
TRADES($\beta$=3) &   99.21 &   98.96 &       96.60 &       7.83 & 0.25 &     1.34 \\
TRADES($\beta$=6) &                      97.50 &                     97.54 &                         93.54 &                       2.86 &        -0.04 &                0.39 \\
\bottomrule
\end{tabular}
\vspace{1em}
	\caption{MNIST on CNN1, multi-targeted attack}
	\label{tab:mnist1_mt}
\end{table}

\begin{table}[ht!]
	\scriptsize
	\setlength{\tabcolsep}{4pt}
	\centering
\begin{tabular}{lc|cc|c|cc}
\toprule
{} &  \thead{train \\ accuracy} &  \thead{test \\ accuracy} &  \thead{adv test \\ accuracy} &  \thead{test \\ lipschitz} &  \thead{gap} &  \thead{adv \\ gap} \\
\midrule
Natural           &  100.00 &   99.51 &       81.35 &      23.06 & 0.49 &    -0.87 \\
GR                &   99.99 &   99.55 &       92.93 &      20.26 & 0.44 &     2.39 \\
LLR               &  100.00 &   99.57 &       93.76 &       9.75 & 0.43 &     1.70 \\
AT                &   99.98 &   99.48 &       98.01 &       6.09 & 0.50 &     1.94 \\
RST($\lambda$=.5) &                     100.00 &                     99.53 &                         97.69 &                       8.27 &         0.47 &                2.30 \\
RST($\lambda$=1)  &                     100.00 &                     99.55 &                         98.25 &                       6.26 &         0.45 &                1.74 \\
RST($\lambda$=2)  &                     100.00 &                     99.56 &                         98.46 &                       4.56 &         0.44 &                1.53 \\
TRADES($\beta$=1) &   99.96 &   99.58 &       98.06 &       4.74 & 0.38 &     1.73 \\
TRADES($\beta$=3) &   99.80 &   99.57 &       98.54 &       2.14 & 0.23 &     1.18 \\
TRADES($\beta$=6) &   99.61 &   99.59 &       98.73 &       1.36 & 0.02 &     0.81 \\
\bottomrule
\end{tabular}
\vspace{1em}
	\caption{MNIST on CNN2, multi-targeted attack}
	\label{tab:mnist2_mt}
\end{table}

\begin{table}[ht!]
	\scriptsize
	\setlength{\tabcolsep}{4pt}
	\centering
\begin{tabular}{lc|cc|c|cc}
  \toprule
  {} &  \thead{train \\ accuracy} &  \thead{test \\ accuracy} &  \thead{adv test \\ accuracy} &  \thead{test \\ lipschitz} &  \thead{gap} &  \thead{adv \\ gap} \\
  \midrule
  Natural           &  100.00 &   95.85 &        1.06 &     149.82 & 4.15 &     0.43 \\
  GR                &   96.73 &   87.80 &       14.59 &      40.83 & 8.94 &     2.41 \\
  LLR               &  100.00 &   95.48 &       20.95 &      61.64 & 4.51 &     3.47 \\
  AT                &   95.20 &   92.45 &       49.47 &      13.03 & 2.75 &    14.96 \\
  RST($\lambda$=.5)   &   99.99 &   93.09 &       45.98 &      19.56 & 6.90 &    27.26 \\
  RST($\lambda$=1)    &   99.91 &   93.01 &       47.06 &      23.19 & 6.90 &    29.19 \\
  RST($\lambda$=2)    &   99.25 &   92.39 &       47.58 &      23.18 & 6.86 &    29.99 \\
  TRADES($\beta$=1) &   98.96 &   92.45 &       46.40 &      18.75 & 6.51 &    29.22 \\
  TRADES($\beta$=3) &   99.33 &   91.85 &       49.41 &      10.15 & 7.48 &    32.70 \\
  TRADES($\beta$=6) &   97.19 &   91.83 &       52.82 &       5.20 & 5.35 &    24.28 \\
  \bottomrule
\end{tabular} %
\vspace{1em}
	\caption{SVHN, multi-targeted attack}
	\label{tab:svhn_mt}
\end{table}

\begin{table}[ht!]
	\scriptsize
	\setlength{\tabcolsep}{4pt}
	\centering
\begin{tabular}{lc|cc|c|cc}
  \toprule
  {} &  \thead{train \\ accuracy} &  \thead{test \\ accuracy} &  \thead{adv test \\ accuracy} &  \thead{test \\ lipschitz} &  gap &  \thead{adv \\ gap} \\
  \midrule
  Natural           &  100.00 &   93.81 &        0.00 &     425.71 &  6.19 &     0.00 \\
  GR                &   94.90 &   80.74 &       19.15 &      28.53 & 14.16 &     2.88 \\
  LLR               &  100.00 &   91.44 &       14.58 &      94.68 &  8.56 &     1.32 \\
  AT                &   99.84 &   83.51 &       42.11 &      26.23 & 16.33 &    48.99 \\
  RST($\lambda$=.5) &                      99.90 &                     85.11 &                         38.17 &                      20.61 &        14.79 &               32.92 \\
  RST($\lambda$=1)  &                      99.86 &                     84.61 &                         39.29 &                      22.92 &        15.25 &               38.84 \\
  RST($\lambda$=2)  &                      99.73 &                     83.87 &                         40.06 &                      23.95 &        15.86 &               41.97 \\
  TRADES($\beta$=1) &   99.76 &   84.96 &       42.22 &      28.01 & 14.80 &    43.69 \\
  TRADES($\beta$=3) &   99.78 &   85.55 &       44.53 &      22.42 & 14.23 &    47.91 \\
  TRADES($\beta$=6) &   98.93 &   84.46 &       46.05 &      13.05 & 14.47 &    43.40 \\
  \bottomrule
\end{tabular} %
\vspace{1em}
	\caption{CIFAR-10, multi-targeted attack}
	\label{tab:cifar_mt}
\end{table}

\begin{table}[ht!]
	\scriptsize
	\setlength{\tabcolsep}{4pt}
	\centering
\begin{tabular}{lc|cc|c|cc}
  \toprule
  {} &  \thead{train \\ accuracy} &  \thead{test \\ accuracy} &  \thead{adv test \\ accuracy} &  \thead{test \\ lipschitz} &  \thead{gap} &  \thead{adv \\ gap} \\
  \midrule
  Natural           &   97.72 &   93.47 &        4.21 &   32228.51 & 4.25 &    -0.24 \\
  GR                &   91.12 &   88.51 &       60.61 &     886.75 & 2.61 &    -0.16 \\
  LLR               &   98.76 &   93.44 &       50.21 &    4795.66 & 5.32 &    -0.31 \\
  AT                &   96.22 &   90.33 &       81.91 &     287.97 & 5.90 &     8.27 \\
  RST($\lambda=.5$) &   96.78 &   92.13 &       78.53 &     439.77 & 4.65 &     4.91 \\
  RST($\lambda=1$)  &   95.61 &   92.06 &       79.33 &     366.21 & 3.55 &     4.68 \\
  RST($\lambda=2$)  &   96.00 &   91.14 &       81.12 &     390.61 & 4.86 &     6.15 \\
  TRADES($\beta=1$) &   97.39 &   92.27 &       79.46 &    2144.66 & 5.13 &     6.61 \\
  TRADES($\beta=3$) &   95.74 &   90.75 &       82.00 &     396.67 & 5.00 &     6.35 \\
  TRADES($\beta=6$) &   93.34 &   88.92 &       81.90 &     200.90 & 4.42 &     5.28 \\
  \bottomrule
\end{tabular} %
\vspace{1em}
	\caption{Restricted ImageNet, multi-targeted attack}
	\label{tab:imagnet_mt}
\end{table}

\begin{table}[ht!]
	\scriptsize
	\setlength{\tabcolsep}{4pt}
	\centering
\begin{tabular}{lc|cc|c|cc||cc|c|cc}
  \toprule
  {}  & {} & \multicolumn{5}{c||}{SVHN} &
             \multicolumn{5}{c}{CIFAR-10} \\
  \midrule
  {} &  \thead{dropout} & \thead{test \\ acc.} &  \thead{adv test \\ acc.} &  \thead{test \\ lipschitz} &  \thead{gap} &  \thead{adv \\ gap}
                        & \thead{test \\ acc.} &  \thead{adv test \\ acc.} &  \thead{test \\ lipschitz} &  \thead{gap} &  \thead{adv \\ gap} \\
  \midrule
  Natural           &    False &    95.85 &   1.06 & 149.82 &   4.15 &   0.87  &  93.81 &    0.00 &425.71 &   6.19 &   0.00  \\
  Natural           &     True &    96.66 &   1.52 & 152.38 &   3.34 &   1.22  &  93.87 &    0.00 &384.48 &   6.13 &   0.00  \\
  \midrule
  AT                &    False &    91.68 &  49.22 &  16.51 &   5.11 &  25.74  &  83.51 &   42.11 & 26.23 &  16.33 &  49.94  \\
  AT                &     True &    93.05 &  52.44 &  11.68 &  -0.14 &   6.48  &  85.20 &   41.31 & 31.59 &  14.51 &  44.05  \\
  \midrule
  RST($\lambda$=2)  &    False &    92.39 &  47.58 &  23.17 &   6.86 &  36.02  &  83.87 &   40.06 & 23.80 &  15.86 &  43.54  \\
  RST($\lambda$=2)  &     True &    95.19 &  50.37 &  17.59 &   1.90 &  11.30  &  85.49 &   38.66 & 34.45 &  14.00 &  33.07  \\
  \midrule
  TRADES($\beta$=3) &    False &    91.85 &  49.41 &  10.15 &   7.48 &  33.33  &  85.55 &   44.53 & 22.42 &  14.23 &  47.67  \\
  TRADES($\beta$=3) &     True &    94.00 &  57.11 &   4.99 &   0.48 &   7.91  &  86.43 &   46.38 & 14.69 &  12.59 &  35.03  \\
  \midrule
  TRADES($\beta$=6) &    False &    91.83 &  52.82 &   5.20 &   5.35 &  23.88  &  84.46 &   46.05 & 13.05 &  14.47 &  42.65  \\
  TRADES($\beta$=6) &     True &    93.46 &  58.53 &   3.30 &   0.45 &   5.97  &  84.69 &   49.72 &  8.13 &  11.91 &  26.49  \\
  \bottomrule
\end{tabular}
\vspace{1em}
	\caption{Dropout and generalization.
	SVHN (perturbation 0.031, dropout rate $0.5$) and CIFAR-10 (perturbation 0.031, dropout rate $0.2$).
	We evaluate adversarial accuracy with the multi-targeted attack and compute Lipschitzness with Eq. (\ref{equ: local-Lipschitz quantity}).}
	\label{tab:mt_dropout}
\end{table} 
\end{document}